\documentclass{article}

 \usepackage[preprint]{neurips_2026}

\usepackage[normalem]{ulem}
\usepackage{xcolor}

\usepackage[utf8]{inputenc} 
\usepackage[T1]{fontenc}    
\usepackage{hyperref}       
\usepackage{url}            
\usepackage{booktabs}       
\usepackage{amsfonts}       
\usepackage{nicefrac}       
\usepackage{microtype}      
\usepackage{xcolor}         

\usepackage{natbib}
\usepackage{algorithm}
\usepackage{algpseudocode}
\usepackage{paralist}
\usepackage{multirow}

\usepackage{wrapfig}

\usepackage{url}
\usepackage{color}
\usepackage{makeidx}  
\usepackage{xspace}
\usepackage{epstopdf}
\usepackage{mathrsfs}
\usepackage{enumerate}
\usepackage{graphicx,epsfig}
\usepackage{bm}
\usepackage{bbm}
\usepackage{upgreek}
\usepackage{caption}
\usepackage{subcaption}

\usepackage{algorithm}
\usepackage{algpseudocode}  
\usepackage{enumitem}

\usepackage{amsmath}
\usepackage{amssymb}
\usepackage{mathtools}
\usepackage{amsthm}
\usepackage{float}

\usepackage{cleveref}

\title{Dual-Agent Co-Training for Health Coaching via Implicit Adversarial Preference Optimization}

\author{%
  Da Long\\
  Kahlert School of Computing \\
  University of Utah \\
  \texttt{da.long@utah.edu} \\
  \And
  Lingyi Fu \\
  Department of Health and Kinesiology \\
  University of Utah \\
  \texttt{Lingyi.Fu@utah.edu} \\
  \And
  Diya Michelle Rao \\
  Department of Health and Kinesiology \\
  University of Utah \\
  \texttt{diya.rao@utah.edu} \\
  \AND
  Jasmine Ruales Carrera \\
  Department of Health and Kinesiology \\
  University of Utah \\
  \texttt{u1300721@utah.edu} \\
  \And
  Yang Bai \\
  Department of Health and Kinesiology \\
  University of Utah \\
  \texttt{Yang.Bai@utah.edu} \\
  \And
  Shandian Zhe\thanks{Correspondence to: Shandian Zhe
<zhe@cs.utah.edu>.}  \\
  Kahlert School of Computing \\
  University of Utah \\
  \texttt{zhe@cs.utah.edu} \\
}

\theoremstyle{plain}
\newtheorem{theorem}{Theorem}[section]
\newtheorem{proposition}[theorem]{Proposition}
\newtheorem{lemma}[theorem]{Lemma}

\theoremstyle{definition}
\newtheorem{definition}[theorem]{Definition}

\theoremstyle{remark}

\newcommand{\ours}{DACT\xspace}

\begin{document}

\maketitle

\newcommand{\var}{{\rm var}}
\newcommand{\vtrans}[2]{{#1}^{(#2)}}
\newcommand{\kron}{\otimes}
\newcommand{\schur}[2]{({#1} | {#2})}
\newcommand{\schurdet}[2]{\left| ({#1} | {#2}) \right|}
\newcommand{\had}{\circ}
\newcommand{\diag}{{\rm diag}}
\newcommand{\invdiag}{\diag^{-1}}
\newcommand{\rank}{{\rm rank}}
 \newcommand{\expt}[1]{\langle #1 \rangle}
\newcommand{\nullsp}{{\rm null}}
\newcommand{\tr}{{\rm tr}}
\renewcommand{\vec}{{\rm vec}}
\newcommand{\vech}{{\rm vech}}
\renewcommand{\det}[1]{\left| #1 \right|}
\newcommand{\pdet}[1]{\left| #1 \right|_{+}}
\newcommand{\pinv}[1]{#1^{+}}
\newcommand{\erf}{{\rm erf}}
\newcommand{\hypergeom}[2]{{}_{#1}F_{#2}}
\newcommand{\mcal}[1]{\mathcal{#1}}
\newcommand{\bepsilon}{\boldsymbol{\epsilon}}
\newcommand{\brho}{\boldsymbol{\rho}}
\renewcommand{\a}{{\bf a}}
\renewcommand{\b}{{\bf b}}
\renewcommand{\c}{{\bf c}}
\renewcommand{\d}{{\rm d}}  
\newcommand{\e}{{\bf e}}
\newcommand{\f}{{\bf f}}
\newcommand{\g}{{\bf g}}
\newcommand{\h}{{\bf h}}
\newcommand{\bi}{{\bf i}}
\newcommand{\bj}{{\bf j}} 

\renewcommand{\k}{{\bf k}}
\newcommand{\m}{{\bf m}}
\newcommand{\mhat}{{\overline{m}}}
\newcommand{\tm}{{\tilde{m}}}
\newcommand{\n}{{\bf n}}
\renewcommand{\o}{{\bf o}}
\newcommand{\p}{{\bf p}}
\newcommand{\q}{{\bf q}}
\newcommand{\wy}{{\widehat{\y}}}
\newcommand{\wlam}{{\widehat{\lambda}}}
\renewcommand{\r}{{\bf r}}
\newcommand{\s}{{\bf s}}
\renewcommand{\t}{{\bf t}}
\renewcommand{\u}{{\bf u}}
\renewcommand{\v}{{\bf v}}
\newcommand{\w}{{\bf w}}
\newcommand{\x}{{\bf x}}
\newcommand{\y}{{\bf y}}
\newcommand{\z}{{\bf z}}
\newcommand{\A}{{\bf A}}
\newcommand{\B}{{\bf B}}
\newcommand{\C}{{\bf C}}
\newcommand{\D}{{\bf D}}
\newcommand{\F}{{\bf F}}
\newcommand{\G}{{\bf G}}
\newcommand{\Gcal}{{\mathcal{G}}}
\newcommand{\Dcal}{\mathcal{D}}
\newcommand{\Qcal}{{\mathcal{Q}}}
\newcommand{\Pcal}{{\mathcal{P}}}
\newcommand{\Hcal}{{\mathcal{H}}}
\renewcommand{\H}{{\bf H}}
\newcommand{\I}{{\bf I}}
\newcommand{\J}{{\bf J}}
\newcommand{\K}{{\bf K}}
\renewcommand{\L}{{\bf L}}
\newcommand{\Lcal}{{\mathcal{L}}}
\newcommand{\M}{{\bf M}}
\newcommand{\Mcal}{{\mathcal{M}}}
\newcommand{\Acal}{{\mathcal{A}}}
\newcommand{\Ocal}{{\mathcal{O}}}
\newcommand{\Fcal}{{\mathcal{F}}}
\newcommand{\N}{\mathcal{N}}  
\newcommand{\bupeta}{\boldsymbol{\upeta}}
\renewcommand{\O}{{\bf O}}
\renewcommand{\P}{{\bf P}}
\newcommand{\Q}{{\bf Q}}
\newcommand{\R}{{\bf R}}
\renewcommand{\S}{{\bf S}}
\newcommand{\Scal}{{\mathcal{S}}}
\newcommand{\T}{{\bf T}}
\newcommand{\Tcal}{{\mathcal{T}}}
\newcommand{\U}{{\bf U}}
\newcommand{\Ucal}{{\mathcal{U}}}
\newcommand{\tUcal}{{\tilde{\Ucal}}}
\newcommand{\V}{{\bf V}}
\newcommand{\W}{{\bf W}}
\newcommand{\Wcal}{{\mathcal{W}}}
\newcommand{\Vcal}{{\mathcal{V}}}
\newcommand{\X}{{\bf X}}
\newcommand{\Xcal}{{\mathcal{X}}}
\newcommand{\Y}{{\bf Y}}
\newcommand{\Ycal}{{\mathcal{Y}}}
\newcommand{\Z}{{\bf Z}}
\newcommand{\Zcal}{{\mathcal{Z}}}

\newcommand{\bfLambda}{\boldsymbol{\Lambda}}

\newcommand{\bsigma}{\boldsymbol{\sigma}}
\newcommand{\balpha}{\boldsymbol{\alpha}}
\newcommand{\bpsi}{\boldsymbol{\psi}}
\newcommand{\bphi}{\boldsymbol{\phi}}
\newcommand{\bPhi}{\boldsymbol{\Phi}}
\newcommand{\bbeta}{\boldsymbol{\beta}}
\newcommand{\Beta}{\boldsymbol{\eta}}
\newcommand{\btau}{\boldsymbol{\tau}}
\newcommand{\bvarphi}{\boldsymbol{\varphi}}
\newcommand{\bzeta}{\boldsymbol{\zeta}}

\newcommand{\blambda}{\boldsymbol{\lambda}}
\newcommand{\bLambda}{\mathbf{\Lambda}}

\newcommand{\btheta}{\boldsymbol{\theta}}
\newcommand{\bpi}{\boldsymbol{\pi}}
\newcommand{\bxi}{\boldsymbol{\xi}}
\newcommand{\bSigma}{\boldsymbol{\Sigma}}
\newcommand{\bPi}{\boldsymbol{\Pi}}
\newcommand{\bOmega}{\boldsymbol{\Omega}}

\newcommand{\bx}{{\bf x}}
\newcommand{\bgamma}{\boldsymbol{\gamma}}
\newcommand{\bGamma}{\boldsymbol{\Gamma}}
\newcommand{\bUpsilon}{\boldsymbol{\Upsilon}}

\newcommand{\bmu}{\boldsymbol{\mu}}
\newcommand{\0}{{\bf 0}}

\newcommand{\bs}{\backslash}
\newcommand{\ben}{\begin{enumerate}}
\newcommand{\een}{\end{enumerate}}

 \newcommand{\notS}{{\backslash S}}
 \newcommand{\nots}{{\backslash s}}
 \newcommand{\noti}{{\backslash i}}
 \newcommand{\notj}{{\backslash j}}
 \newcommand{\nott}{\backslash t}
 \newcommand{\notone}{{\backslash 1}}
 \newcommand{\nottp}{\backslash t+1}

\newcommand{\notk}{{^{\backslash k}}}
\newcommand{\notij}{{^{\backslash i,j}}}
\newcommand{\notg}{{^{\backslash g}}}
\newcommand{\wnoti}{{_{\w}^{\backslash i}}}
\newcommand{\wnotg}{{_{\w}^{\backslash g}}}
\newcommand{\vnotij}{{_{\v}^{\backslash i,j}}}
\newcommand{\vnotg}{{_{\v}^{\backslash g}}}
\newcommand{\half}{\frac{1}{2}}
\newcommand{\msgb}{m_{t \leftarrow t+1}}
\newcommand{\msgf}{m_{t \rightarrow t+1}}
\newcommand{\msgfp}{m_{t-1 \rightarrow t}}

\newcommand{\proj}[1]{{\rm proj}\negmedspace\left[#1\right]}

\newcommand{\dif}{\mathrm{d}}
\newcommand{\abs}[1]{\lvert#1\rvert}
\newcommand{\norm}[1]{\lVert#1\rVert}

\newcommand{\mrm}[1]{\mathrm{{#1}}}
\newcommand{\RomanCap}[1]{\MakeUppercase{\romannumeral #1}}
\newcommand{\EE}{\mathbb{E}}
\newcommand{\bbI}{\mathbb{I}}
\newcommand{\bbH}{\mathbb{H}}
\newcommand{\ie}{{\textit{i.e.,}}\xspace}
\newcommand{\eg}{{\textit{e.g.,}}\xspace}
\newcommand{\etc}{{\textit{etc.}}\xspace}
\newcommand{\cmt}[1]{}	
\begin{abstract}
Motivational-interviewing-based health coaching is an effective approach for improving mental health and promoting healthy behavior change. However, the scarcity of trained human coaches and the high cost of coaching services make such support inaccessible to many people who could benefit from it.
This motivates the development of AI health coaches that can provide scalable and affordable support. 
Existing methods typically optimize only one side of the interaction: they either train a dialogue agent against a fixed client environment or train a client simulator against a fixed assistant. This one-sided setup can limit exploration of the interaction space and may be inefficient at developing the capabilities required by the target agent and pushing its performance boundaries.  In this paper, we propose a dual-agent framework that interactively co-trains both the health coach agent and the client simulator. The coach is optimized with DPO using Pareto-dominant response pairs identified by a multi-dimensional LLM judge. In turn, the client is trained adversarially by reversing these preferences, inducing an implicit adversarial training dynamic. We further show that this co-training process admits a natural stochastic-game interpretation. 
Extensive experiments demonstrate that our method effectively improves coaching quality across several important dimensions.
\end{abstract}

\section{Introduction}
Motivational interviewing is a widely used health-coaching approach for improving mental health and promoting healthy behavior change~\citep{rollnick2022mihealthcare,miller2023motivational}. Its effectiveness depends not only on whether a conversation reaches a desirable final outcome, but also on whether the coach behaves appropriately throughout the interaction. At each turn, the coach must make context-sensitive decisions, such as eliciting and strengthening change talk, responding skillfully to sustain talk, expressing empathy, helping the client clarify needs and ambivalence, and, when appropriate, guiding the client toward concrete goals and action plans. In practice, however, the scarcity of trained human coaches and the high cost of coaching services make such support inaccessible to many people who could benefit from it. This creates an urgent need for AI health coaches that can provide scalable and affordable support.

Although many AI assistants have been developed for dialogue with human users~\citep{ouyang2022training,bai2022constitutional,rafailov2023dpo}, most existing methods focus primarily on optimizing conversation-level outcomes. For health coaching, however, success depends on the quality of the coach’s behavior throughout the entire interviewing session, not merely on the final result. Moreover, existing training methods typically optimize only one side of the interaction: they either train a dialogue agent against a fixed client environment~\citep{gao2025refuel,zhou2025sweet,zhao2025cso}, or train a client simulator against a fixed assistant~\citep{Zhu2026-or}. This asymmetric setup can limit exploration of the interaction space during training and may be inefficient in developing the capabilities required by the target agents and driving them toward their full potential.

In this paper, we propose \ours, a dual-agent co-training framework for motivational-interviewing-based health coaching. Our method jointly trains a client simulator and a coach agent through interactive dialogue, enabling richer exploration of interviewing dynamics and, in particular, of the challenging interaction patterns that are most informative for improving coaching quality. Our main contributions are as follows.
\begin{compactitem}
\item \textbf{Dual-Agent Preference Tree Construction.} Instead of relying on expensive expert demonstrations, we introduce an efficient and low-cost tree-based rollout procedure for collecting preference pairs during training. Each dialogue tree is initialized by prompting the client simulator with a short random persona generated by a low-cost third-party LLM, after which the coach and client interact to simulate a health-coaching interview. At selected dialogue states, the tree branches by re-sampling multiple coach utterances under the same interaction history and recursively expanding the resulting continuations. Preference pairs are then constructed from utterances at the same tree level according to their estimated Q-values.

\item \textbf{Multi-Dimensional LLM Judge for Q-Value Estimation.} We design a multi-dimensional LLM judge to evaluate the coach’s behavior throughout the dialogue. To ensure that each coaching dimension is evaluated in an appropriate context, the judging process is decomposed into three stages: client-state identification, state-conditioned sentence-level functional labeling, and final reasoning-based aggregation and scoring. The resulting node-level scores are propagated upward with a discount factor and aggregated into Q-values that represent the overall quality of candidate coach utterances.

\item \textbf{Implicit Adversarial Preference Training.} For the coach, we construct preference pairs by requiring the preferred response to outperform the rejected response on every judged dimension, yielding Pareto-dominant preference pairs for DPO training. This encourages the coach to improve across all targeted dimensions simultaneously. For the client simulator, we reverse the coach-side preferences and train it adversarially: responses that make the coach perform worse are treated as preferred. As a result, the client learns to produce utterances that are increasingly challenging for the current coach to handle appropriately, which in turn drives the coach toward stronger performance through co-training. We further show that this simple implicit adversarial dynamic admits a natural stochastic-game interpretation.

\item \textbf{Experiments.} We evaluate \ours on three core coaching dimensions: cultivating change talk, softening or skillfully responding to sustain talk, and empathy. Blind ranking studies show strong agreement between the preferences induced by our LLM judge and those of human judges. Across several challenging client conditions, \ours consistently improves coaching quality throughout the interview compared with carefully prompted state-of-the-art LLM baselines and a purely supervised fine-tuning (SFT) baseline. Extensive ablations further demonstrate that our co-training framework substantially outperforms both one-sided training and purely supervised fine-tuning.

\end{compactitem}

\section{Methodology}
\label{sec:method}

\begin{figure}[h!]
  \centering
\includegraphics[width=\linewidth]{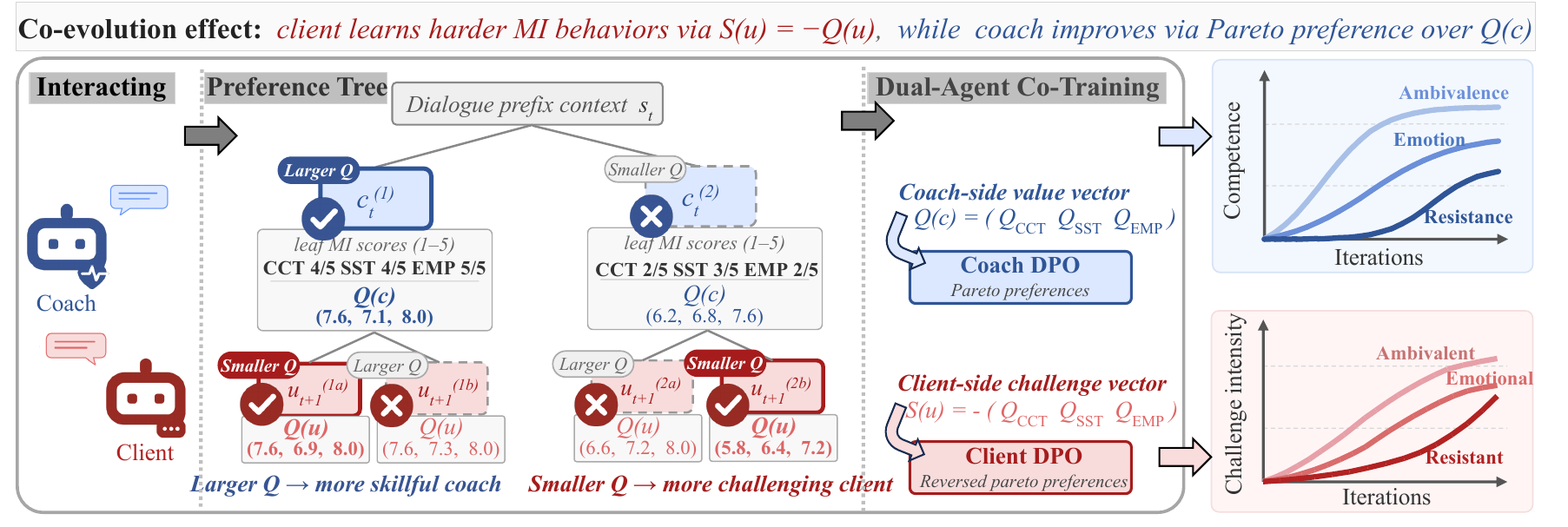}
  \caption{\small 
    \textbf{Overview of DACT.}
    \emph{Left: }at each co-evolution round, the coach adapter $\pi_C$ and client adapter $\pi_U$ engage in multi-turn dialogue.
    \emph{Middle:} the coach branches into candidate utterances $\{c_t^{(i)}\}$; each branch is followed by candidate client responses $\{u_{t+1}^{(ij)}\}$. A frozen LLM judge scores every coach node on a 1--5 scale, and the resulting scores are backed up into per-node Q-vectors $Q(c)$ and $Q(u)$.
    \emph{Right:} the coach is trained on \emph{Pareto-dominant} pairs over $Q(c)$ (chosen = larger $Q$, more skillful), while the client is trained on the \emph{same} tree under the sign-flipped signal $S(u) = -Q(u)$ (chosen = smaller $Q$, more challenging for the coach).
    \emph{Far right:} across iterations, the coach's competence and the client's challenge intensity grow in tandem along the resistance, emotion, and ambivalence dimensions of MI client states, empirically realizing the adversarial co-evolution that is central to DACT.
  }
  \label{fig:method}
\end{figure}

\textbf{Problem formulation.}
We formulate the co-training of the client simulator and the coach agent as a two-agent reinforcement learning problem, $\mathcal{G} = (\mathcal{S}, \mathcal{A}^C, \mathcal{A}^U, P, \mathbf{R}, \gamma, T)$,
where $\mathcal{S}$ denotes the state space, $\mathcal{A}^C$ and $\mathcal{A}^U$ denote the action spaces of the coach and the client, respectively, $P$ denotes the transition dynamics, $\mathbf{R}$ denotes the reward signal, $\gamma \in (0,1]$ is the discount factor, and $T$ is the maximum horizon, i.e., the maximum number of coach turns in an interview session.
Each step corresponds to one coach turn in the dialogue. At step $t$, the state $s_t \in \mathcal{S}$ is the dialogue history immediately before the coach acts. Given $s_t$, the coach generates an utterance $c_t \in \mathcal{A}^C$ according to its policy $\pi_C(\cdot \mid s_t)$. The client then responds with an utterance $u_{t+1} \in \mathcal{A}^U$ according to its policy $\pi_U(\cdot \mid s_t, c_t)$. The dialogue history is updated as $s_{t+1} = (s_t, c_t, u_{t+1})$. 
The interaction terminates when the coach emits a termination token or when the maximum number of turns $T$ is reached.

Unlike prior approaches that optimize only one side of the interaction while fixing the other, our goal is to jointly learn both the coach policy $\pi_C$ and the client policy $\pi_U$. Consequently, from the perspective of the coach, the effective environment is non-stationary, since the transition dynamics are induced by the evolving client policy: $P(s_{t+1} \mid s_t, c_t) = \pi_U(u_{t+1} \mid s_t, c_t)$.
This co-evolving interaction structure enriches exploration of the dialogue space and exposes the coach to increasingly challenging conversational situations. As we show later, our joint training process admits a natural stochastic-game interpretation.

A well-executed motivational interview requires the coach to perform well across multiple dimensions throughout the entire dialogue, rather than merely achieving a desirable final outcome. We therefore introduce a vector-valued reward signal to evaluate the quality of each coach action: $\mathbf{R} = (r_1, \ldots, r_d)$.
In this work, we focus on three key dimensions ($d=3$), selected based on the Motivational Interviewing Treatment Integrity (MITI) 4.2.1 coding manual~\citep{moyers2014motivational}: \emph{Cultivating Change Talk} (CCT), \emph{Softening Sustain Talk} (SST), and \emph{Empathy}. Our framework can be naturally extended to incorporate additional dimensions. Each reward component is scored on a five-point scale, $r_i \in \{1,2,3,4,5\}$,
and the semantic meaning of each score is provided in Appendix~\ref{sect:appendix:reward-scale}.

\subsection{Preference Dialogue Tree Construction}

After initialization---typically through lightweight supervised fine-tuning of a small LLM---we iteratively refine both policies using direct preference optimization (DPO)~\citep{rafailov2023dpo}. At each iteration, preference pairs are collected from a batch of dialogue trees generated by the previous client and coach policies.

Specifically, at iteration $k$, we construct each dialogue tree by letting the coach and client agents initiate an interview under their previous policies $(\pi_C^{k-1}, \pi_U^{k-1})$. The dialogue first unfolds linearly until it reaches a \emph{branching point}, namely, a selected coach turn at which branching begins. At that point, the coach samples multiple candidate utterances conditioned on the same dialogue history, each of which defines a separate branch. For each such coach branch, the client then independently samples multiple responses, yielding further branches in the rollout tree. This branching-and-rollout process continues recursively until each dialogue path terminates or a predefined maximum tree depth is reached.  To ensure broad coverage of conversational dynamics, the branching point is selected from a range of dialogue steps, so that both early-stage and late-stage coaching decisions can be explored. The resulting balanced dialogue tree ensures that sibling nodes---utterances generated at the same depth under the same dialogue prefix---are compared under equal rollout budgets. This allows us to construct preference pairs in a controlled and less biased manner.

\subsection{Multi-Dimensional LLM Judge}
\label{sec:main_llm_judge}
While human experts can provide high-quality and clinically informed evaluations of coaching performance, such evaluation is expensive and time-consuming. To enable scalable learning, we develop a multi-dimensional LLM judge that evaluates coach behavior throughout the dialogue tree. These turn-level scores are then propagated through the tree and aggregated into Q-value estimates. To support clinically meaningful evaluation, we encode extensive motivational interviewing (MI) knowledge~\citep{moyers2014motivational}---including concepts such as change talk \textit{vs.} sustain talk, the righting reflex, and OARS micro-skills---into the judge prompt, and perform a three-step evaluation procedure.

\paragraph{Step 1: Client-state identification.}
The judge first classifies the client's most recent utterance into one of five states: \emph{engaged} (clear change talk), \emph{ambivalent} (both change talk and sustain talk are present), \emph{sustain-resistant} (reasons against change or explicit pushback), \emph{emotional} (difficult emotions are in the foreground), or \emph{informational} (neutral background sharing). This step is important because an MI-appropriate coach response is highly state-dependent: what is skillful for an engaged client (\eg reinforcing change talk) may be premature for an emotional client, where the coach should first acknowledge and sit with the feeling rather than immediately evoke change.

\paragraph{Step 2: State-conditioned sentence-level function labeling.}
Conditioned on the identified client state, the judge then decomposes the coach utterance into individual sentences and assigns each sentence a function label from a 17-category taxonomy organized into three groups:

\begin{itemize}[leftmargin=1.5em, topsep=2pt, itemsep=1pt]
    \item \textbf{Group A} (MI-adherent, 6 categories): \texttt{open\_question\_evoking}, \texttt{simple\_reflection}, \texttt{complex\_reflection}, \texttt{double\_sided\_reflection}, \texttt{affirmation}, \texttt{summary}.
    \item \textbf{Group B} (neutral, 3 categories): \texttt{neutral\_question}, \texttt{rapport\_or\_info}, \texttt{other}.
    \item \textbf{Group C} (MI-non-adherent, 8 categories): \texttt{leading\_question}, \texttt{premature\_planning}, \texttt{arguing\_for\_change}, \texttt{arguing\_against\_sustain}, \texttt{reassurance}, \texttt{unsolicited\_advice}, \texttt{parroting}, \texttt{distorted\_reflection}.
\end{itemize}

\paragraph{Step 3: Reasoning-based aggregation and scoring.}
Finally, for each target dimension---CCT, SST, and Empathy---the judge produces detailed reasoning that references the relevant sentences and their function labels, identifies the applicable scoring rules, and aggregates them into a reward score in $\{1,2,3,4,5\}$. The output schema enforces that the reasoning fields appear before the score fields, which discourages post-hoc rationalization and makes the scoring process more transparent.

\noindent\textbf{Q-value estimation.}
Let $\mathbf{r}(n) = (r_1(n), r_2(n), r_3(n))$ denote the immediate reward vector at node $n$. A node may correspond to either a coach utterance or a client utterance. Since our objective is to evaluate coaching quality, only coach nodes receive nonzero immediate rewards from the LLM judge; for client nodes, we set $\mathbf{r}(n) = \mathbf{0}$.

The immediate judge scores capture turn-level coaching quality, but they do not reflect the downstream effect of a coach utterance on future interaction trajectories. For example, an open question may receive only a moderate immediate score, yet elicit change talk from the client and thereby enable stronger coach responses in later turns. To account for such delayed effects, we propagate turn-level scores upward through the dialogue tree using a discounted recursive backup. Specifically, for each node $n$ and each dimension $d$, we compute
\begin{equation}
    Q_d(n) = 
    \begin{cases}
        r_d(n), & \text{if $n$ is a leaf}, \\[4pt]
        r_d(n) + \gamma \cdot \displaystyle\frac{1}{|\mathcal{C}(n)|} \sum_{n' \in \mathcal{C}(n)} Q_d(n'), & \text{otherwise},
    \end{cases}
    \label{eq:bellman}
\end{equation}
where $\mathcal{C}(n)$ denotes the set of children of node $n$, and $\gamma$ is the discount factor. This yields a dimension-wise Q-value that reflects both the immediate quality of the current response and its expected downstream effect on future dialogue development.

\subsection{Implicit Adversarial Preference Training}\label{sect:adv-train}

\textbf{Coach preference pairs.}
To train the coach with DPO, we compare coach nodes at the same tree level within each dialogue tree and construct \emph{Pareto-dominant preference pairs} using strict Pareto dominance over their Q-value vectors. Specifically, for two coach nodes $A$ and $B$, we say that $A$ dominates $B$ (denoted by $A \succ B$) if
\begin{equation}
    \forall d \in D:\ Q_d(A) \geq Q_d(B)
    \quad \wedge \quad
    \exists d \in D:\ Q_d(A) > Q_d(B).
    \label{eq:pareto}
\end{equation}
When $A \succ B$, we add the pair $(A,B)$ to the coach preference dataset, with $A$ as the chosen response and $B$ as the rejected response. This Pareto-based construction encourages the coach to improve coaching quality across \emph{all} target dimensions, rather than over-optimizing a single dimension or improving one dimension at the expense of another.

\textbf{Client preference pairs.}
We define a challenge vector for each client node $u$ in the dialogue tree:
\[
S(u) = \bigl(-Q_1(u), -Q_2(u), -Q_3(u)\bigr).
\]
Since $\{Q_d(u)\}$ reflects the downstream coaching quality following the client utterance at node $u$, a smaller value of $Q_d(u)$---equivalently, a larger value of the corresponding component in $S(u)$---indicates that the subsequent coach behavior is worse on dimension $d$. Therefore, a larger challenge vector corresponds to a client utterance that is more challenging for the current coach.

We compare client nodes among comparable nodes at the same tree depth and construct \emph{Pareto-dominant preference pairs} using the challenge vectors. Specifically, for two client nodes $u_1$ and $u_2$, we prefer $u_1$ over $u_2$ if
\begin{equation}
\forall d \in D:\ S_d(u_1) \geq S_d(u_2)
\quad \wedge \quad
\exists d \in D:\ S_d(u_1) > S_d(u_2).
\label{eq:challenges}
\end{equation}
Such pairs are added to the client preference dataset with $u_1$ as the chosen response and $u_2$ as the rejected response. In other words, client-side preferences are constructed by reversing the coach-side notion of quality: utterances are preferred when they make the downstream coaching outcome worse across all target dimensions.

Both the coach and the client are then optimized using the standard DPO objective:
\begin{equation}
    \mathcal{L}_{\text{DPO}}(\pi_\theta; \pi_{\text{ref}})
    =
    -\mathbb{E}_{(x,y_w,y_l)}
    \left[
    \log \sigma \left(
    \beta
    \left(
    \log \frac{\pi_\theta(y_w \mid x)}{\pi_{\text{ref}}(y_w \mid x)}
    -
    \log \frac{\pi_\theta(y_l \mid x)}{\pi_{\text{ref}}(y_l \mid x)}
    \right)
    \right)
    \right],
    \label{eq:dpo}
\end{equation}
where $\pi_{\text{ref}} = \pi_\theta^{k-1}$ (or $\pi^{\text{SFT}}_U$ for the client; see Appendix~\ref{sec:appendix:hyperparams}) is the policy from the previous training round, $y_w$ and $y_l$ denote the chosen and rejected utterances, respectively, and $x$ is the dialogue history. Our overall method is summarized in Appendix Algorithm~\ref{alg:main}.

This construction induces an implicit adversarial dynamic between the two agents. The coach is trained to prefer responses that improve multi-dimensional coaching quality, while the client is trained to prefer utterances that make the subsequent coaching interaction more difficult. In effect, the client learns to produce utterances that challenge the coach, such as sustain talk that resists reframing, ambivalence that resists easy resolution, or emotionally complex statements that require more sophisticated empathy. The adversarial pressure is \emph{implicit} because it arises through the Q-value-based preference construction rather than through an explicit adversarial objective. As the coach improves and resolves its current weaknesses, those failure modes no longer induce low Q-values; consequently, the client’s preference landscape automatically shifts toward exposing the next set of weaknesses. This produces a self-paced adversarial curriculum without requiring manual difficulty scheduling.

\subsection{Stochastic-Game Interpretation}
\label{sec:game-interpretation}

Although our method is implemented through preference optimization rather than direct game-theoretic optimization, the resulting co-training dynamics admit a natural stochastic-game interpretation. The key observation is that the coach-side and client-side preference pairs are consistent with two coupled latent utilities: the coach prefers utterances that improve downstream multi-dimensional coaching quality, whereas the client prefers utterances that reduce it.

To formalize this idea, let $\mathcal{D}=\{1,\ldots,d\}$ denote the set of reward dimensions, and let $\Phi:\mathbb{R}^d \to \mathbb{R}$ be any \emph{strictly Pareto-monotone scalarization}, meaning that for any $x,y\in\mathbb{R}^d$,
\[
\bigl(\forall i,\ x_i \ge y_i \bigr)\ \wedge\ \bigl(\exists j,\ x_j > y_j \bigr)
\quad \Longrightarrow \quad
\Phi(x) > \Phi(y).
\]
We emphasize that $\Phi$ is introduced only for interpretation and is \emph{not} used in training. Using $\Phi$, we define the latent coach and client utilities as
\begin{equation}
    U_C(n) = \Phi(Q(n)),
    \qquad
    U_U(n) = -\,\Phi(Q(n)),
    \label{eq:latent-utility-main}
\end{equation}
where $Q(n)=(Q_1(n),\ldots,Q_d(n))$ is the dimension-wise backed-up return at node $n$.

\begin{lemma}[Coach-side utility consistency]
\label{lem:coach-consistency-main}
If two comparable coach nodes $A$ and $B$ satisfy $A \succ B$ in the strict Pareto sense of Eq.~\ref{eq:pareto}, then $U_C(A) > U_C(B)$.
\end{lemma}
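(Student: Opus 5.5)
The argument is a direct unfolding of the definitions. By Eq.~\ref{eq:pareto}, the hypothesis $A \succ B$ says that the backed-up Q-vectors $x := Q(A)$ and $y := Q(B)$ in $\mathbb{R}^d$ satisfy two conditions: $x_i \ge y_i$ for every $i \in \mathcal{D}$, and $x_j > y_j$ for at least one $j \in \mathcal{D}$. These vectors are well defined because Eq.~\ref{eq:bellman} assigns every node a finite $d$-dimensional vector. The recursion terminates because the tree has finite depth, so each $Q_d(n)$ is a finite combination of bounded judge scores in $\{1,\dots,5\}$ (and zeros at client nodes).

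The antecedent of the strict Pareto-monotonicity property of $\Phi$ is exactly the pair of conditions above. Applying the defining implication of a strictly Pareto-monotone scalarization to $x$ and $y$ therefore gives $\Phi(Q(A)) > \Phi(Q(B))$.

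Finally, substituting the definition $U_C(n) = \Phi(Q(n))$ from Eq.~\ref{eq:latent-utility-main} turns this into $U_C(A) > U_C(B)$, which is the claim.

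No step is a genuine obstacle. The only point needing care is bookkeeping: the index set $D$ in Eq.~\ref{eq:pareto} must be identified with $\mathcal{D}=\{1,\dots,d\}$, and the Q-vectors must be the same objects used in Eq.~\ref{eq:latent-utility-main}. The comparability of $A$ and $B$ (same tree level) plays no role in the inequality. It matters only for the interpretation, since it ensures the two utilities are being compared at the same decision point.
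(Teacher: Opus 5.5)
Your proof is correct and follows the paper's argument exactly: the Pareto dominance of $Q(A)$ over $Q(B)$ is precisely the antecedent of strict Pareto-monotonicity, which gives $\Phi(Q(A)) > \Phi(Q(B))$, and this is $U_C(A) > U_C(B)$ by definition. Your extra remarks on why the $Q$-vectors are well defined, and on comparability playing no role in the inequality, are harmless additions that the paper leaves out.
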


Lemma~\ref{lem:coach-consistency-main} shows that every Pareto-dominant coach preference pair is consistent with maximizing a scalar utility that is monotone in all coaching dimensions. Thus, the coach-side DPO update can be interpreted as increasing the probability of actions with higher latent coach utility.

\begin{lemma}[Client-side adversarial consistency]
\label{lem:client-consistency-main}
Let $u_1$ and $u_2$ be two comparable client nodes. If the corresponding challenge vectors satisfy $S(u_1) \succ S(u_2)$,
then $U_U(u_1) > U_U(u_2)$, or equivalently, $U_C(u_1) < U_C(u_2)$.
\end{lemma}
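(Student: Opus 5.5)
The plan is to unfold the definition of the challenge vector, which turns strict Pareto dominance of $S(u_1)$ over $S(u_2)$ into strict Pareto dominance of $Q(u_2)$ over $Q(u_1)$. Once that is established, the strict Pareto monotonicity of $\Phi$ finishes the argument, in the same way as Lemma~\ref{lem:coach-consistency-main}.

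First, by Eq.~\ref{eq:challenges} and the definition $S_d(u) = -Q_d(u)$, the hypothesis $S(u_1)\succ S(u_2)$ says
\[
\bigl(\forall d\in\mathcal{D},\ -Q_d(u_1)\ge -Q_d(u_2)\bigr)\ \wedge\ \bigl(\exists j\in\mathcal{D},\ -Q_j(u_1) > -Q_j(u_2)\bigr).
\]
Multiplying each inequality by $-1$ reverses it. This gives $Q_d(u_2)\ge Q_d(u_1)$ for every $d$, and $Q_j(u_2) > Q_j(u_1)$ for some $j$. In other words, the vector $Q(u_2)$ strictly Pareto-dominates $Q(u_1)$ in exactly the sense required by the defining implication of $\Phi$, with $x = Q(u_2)$ and $y = Q(u_1)$.

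Second, applying the strict Pareto monotonicity of $\Phi$ yields $\Phi(Q(u_2)) > \Phi(Q(u_1))$. By Eq.~\ref{eq:latent-utility-main}, this is $U_C(u_2) > U_C(u_1)$, which is the second form of the conclusion. Multiplying by $-1$ and using $U_U = -\Phi(Q(\cdot))$ gives $U_U(u_1) > U_U(u_2)$. The two forms are equivalent because $U_U = -U_C$ pointwise, so negation converts one into the other.

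There is no real obstacle here; the argument is a direct application of the definitions. The only step needing care is keeping the direction of the inequalities straight after the sign flip, so that the dominating vector fed into $\Phi$ is $Q(u_2)$ and not $Q(u_1)$. I would also remark that $U_C$ and $U_U$ are defined in Eq.~\ref{eq:latent-utility-main} for arbitrary nodes through their $Q$-vectors, so evaluating them at client nodes is well-defined. The word "comparable" (same depth within a tree) only restricts which pairs enter the dataset and plays no role in the implication itself.
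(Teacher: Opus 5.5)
Your proposal is correct and follows the paper's proof. Both negate the challenge-vector condition to get that $Q(u_2)$ strictly Pareto-dominates $Q(u_1)$, then apply the strict Pareto monotonicity of $\Phi$ together with $U_C=\Phi(Q(\cdot))$ and $U_U=-\Phi(Q(\cdot))$; the only cosmetic difference is that you derive the $U_C$ form first and negate to get the $U_U$ form, whereas the paper does the reverse.
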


Lemma~\ref{lem:client-consistency-main} formalizes the adversarial role of the client simulator: client utterances are preferred precisely when they reduce the downstream latent utility of the coach.

Together, these two lemmas imply that our coach-side and client-side preference datasets are utility-consistent with a two-player stochastic game defined on the same dialogue state-transition structure as $\mathcal{G}$, but with opposing latent utilities $(U_C,U_U)$. Consequently, alternating DPO updates for the coach and the client can be interpreted as alternating KL-regularized policy-improvement steps for two adversarially coupled players. Detailed proofs and discussion are provided in Appendix~\ref{sec:appendix:game-interpretation}.

\section{Related Work}\label{sec:related}

Our work is most closely related to recent efforts on tree-based and multi-turn preference learning for LLMs. MCTS-DPO~\citep{xie2024mctsdpo} uses Monte Carlo tree search to construct step-level preference pairs for single-agent reasoning, showing that tree rollouts can provide finer-grained supervision than outcome-only rewards. REFUEL~\citep{gao2025refuel} and SWEET-RL~\citep{zhou2025sweet} extend preference- or value-based optimization to multi-turn interaction, addressing long-horizon credit assignment in dialogue and collaborative agent tasks. In therapeutic dialogue, CSO~\citep{zhao2025cso} applies MCTS and DPO to emotional support conversations by learning turn-level strategy preferences. Our method is related to these lines of work in using rollout-based credit assignment and preference optimization, but differs in three key respects: we study \emph{dual-agent} health-coaching dialogue rather than single-agent reasoning or fixed-user interaction; we construct \emph{strict Pareto-dominant} pairs from multi-dimensional MI rewards rather than scalar values; and we optimize utterance-level coaching behavior instead of strategy selection alone.

A second related thread concerns user simulation and interactive social training. PlatoLM~\citep{kong2024platolm} trains a dedicated user simulator to generate more human-like multi-round dialogue data, while SOTOPIA-$\pi$~\citep{wang2024sotopiapi} improves social agents through behavior cloning and self-reinforcement on LLM-rated interaction trajectories. DIAL~\citep{Zhu2026-or} iteratively trains a realistic dialogue simulator through adversarial preference learning, and Roleplay-doh~\citep{louie2024roleplaydoh} uses expert-authored prompting principles to build clinically plausible simulated patients. These works demonstrate the value of learned or expert-guided simulators, but they either train only the simulator, keep the assistant fixed, or rely on filtered positive trajectories rather than pairwise preference optimization. In contrast, our framework jointly updates both the coach and the client simulator, and the client is trained not only for realism but also to expose the current weaknesses of the coach, yielding an implicit adversarial curriculum during co-training.

Finally, our work connects to preference optimization and multi-objective alignment. DPO~\citep{rafailov2023dpo} provides the basic optimization algorithm we use for both agents. Prior multi-objective alignment methods such as MODPO~\citep{zhou2024modpo} typically rely on scalarization or weighted trade-offs across objectives. By contrast, our coach-side supervision is based on strict Pareto dominance across clinically grounded MI dimensions, avoiding the need to specify task-dependent weights between cultivating change talk, softening sustain talk, and empathy. More broadly, our use of an LLM judge also relates to the RLAIF and self-rewarding paradigm~\citep{bai2022constitutional,yuan2024selfrewarding}, but we apply it in a structured, sentence-aware, multi-dimensional way tailored to motivational interviewing rather than generic helpfulness or single-turn alignment.
\section{Experiments}
\label{sec:experiments}

\subsection{Setup}
\label{sec:setup}

\textbf{Personas and models.}
We construct a pool of structured client personas as role-play conditioning for the client side. These personas are partitioned into disjoint sets to generate the SFT corpus, to condition the client agent during preference tree generation, and to drive client simulators at evaluation.
Both the coach and client agents share \texttt{Qwen2.5-32B-Instruct-GPTQ-Int4}~\citep{qwen2025qwen25technicalreport} as the base, with two independent LoRA~\citep{hu2022lora} adapters $\pi_C, \pi_U$ attached to all attention and MLP projections, trained with a single NVIDIA H200 GPU. The two adapters are initialized by supervised fine-tuning on a synthetic MI dialogue corpus, produced by prompting two GPT-4o-mini agents to role-play a coach and a sampled persona.

\textbf{Co-evolution.} At each iteration, we generate $N{=}3$ preference trees. Each tree is constructed through four alternating branching steps between the coach and client agents. At every step, each agent expands $M{=}3$ child nodes, yielding $M^{4}{=}81$ leaf trajectories per tree. Q-values are then backed up using the discounted recursion in Eq.~\eqref{eq:bellman}, and both adapters are optimized with DPO, using GPT-5.4-mini as the judge. We run 13 co-training iterations in total. Full hyperparameters, prompt templates, data-generation details, and the scoring protocol are provided in Appendix~\ref{sec:appendix:pipeline}.

\subsection{Evaluation Protocol}
\label{sec:eval-protocol}

\textbf{Competing methods.} We compare \textsc{DACT} against three baselines. (1) \textsc{SFT}: the supervised fine-tuned coach $\pi_C^{\mathrm{SFT}}$ without DPO, isolating what co-evolution adds. (2) \textsc{GPT-Coach}: GPT-5.5 prompted with the same system prompt that conditions \textsc{DACT} at inference, providing a matched-prompt control: both methods receive identical instruction at inference time, so any quality gap reflects what training has internalized into $\pi_C$ rather than what the prompt itself elicits. (3) \textsc{GPT-Rubric}: GPT-5.5 conditioned on a longer prompt that embeds the LLM-judge scoring rubric verbatim, giving direct access to the evaluation criteria; this serves as an upper reference for what prompt engineering alone can achieve given the strongest possible specification of the target behavior.

\textbf{Client conditions.} Each coach is evaluated against four clients spanning in-distribution stress and OOD generalization. \textsc{$R_8$-Client} is the co-evolved client from training round 8, identified as the most challenging client across the trajectory by an SFT-fixed-coach probe (Appendix~\ref{sec:appendix:probe}); using $R_8$ rather than the final 
client tests the trained coach against a peak-difficulty in-distribution opponent. The remaining three are out-of-distribution simulators implemented by applying GPT-5.4-mini under three distinct prompts: \textsc{OAI-Emotional} foregrounds difficult emotion that demands acknowledgment before evocation, \textsc{OAI-Resistant} emphasizes precise factual constraints and refuses generic advice, and \textsc{OAI-Ambivalent} holds change talk and sustain talk in equal weight. The four conditions test whether \textsc{DACT}'s coaching quality survives both the hardest client it was trained against and three principled OOD client styles it never saw during training.

\textbf{Metrics.} Each evaluation dialogue runs $T{=}8$ coach steps at temperature $0.2$, with utterances at $t \in [4, 8]$ scored by the LLM judge (Section~\ref{sec:main_llm_judge}). We report two complementary metrics per cell. \emph{mean3} is the cell-level mean of the per-utterance average of CCT, SST, and Empathy on the $1$--$5$ scale, with each dimension also reported separately. \emph{anti\%} is the sentence-level rate of MI anti-patterns---the fraction of sentences whose Step-$2$ function label falls in the eight-class Group~C taxonomy. Full evaluation details are provided in Appendix~\ref{sec:appendix:eval}.
\subsection{Main Comparison}
\label{sec:main-comparison}

Table~\ref{tab:main-results} reports per-dimension scores, \textit{mean3}, and \textit{anti}\% for all the methods, with aggregate summaries across all client conditions.

\begin{table}[t]
\centering
\caption{\small Main comparison: four coach methods on four client conditions, $100$ scored utterances per cell ($20$ personas $\times$ $5$ steps, $t \in [4,8]$). Best values per (client, metric) cell are bolded; the bottom block averages \textit{mean3} and \textit{anti}\% across all four conditions. Cells report mean $\pm$ standard error across personas.}
\vspace{0.1in}
\small
\renewcommand{\arraystretch}{1.00}
\setlength{\tabcolsep}{2.0pt}

\begin{tabular}{@{}llccccc@{}}
\toprule
\textbf{Client} & \textbf{Method} & \textbf{CCT} & \textbf{SST} & \textbf{Empathy} & \textbf{mean3} & \textbf{anti\%} \\
\midrule
\multirow{4}{*}{\textsc{$R_8$-Client}}
 & \textsc{DACT}       & $\mathbf{3.88}${\scriptsize $\pm0.06$} & $\mathbf{4.40}${\scriptsize $\pm0.06$} & $4.46${\scriptsize $\pm0.05$} & $\mathbf{4.25}${\scriptsize $\pm0.05$} & $\mathbf{0.41}${\scriptsize $\pm0.25$} \\
 & \textsc{GPT-Rubric} & $2.52${\scriptsize $\pm0.11$} & $3.01${\scriptsize $\pm0.10$} & $\mathbf{4.47}${\scriptsize $\pm0.05$} & $3.33${\scriptsize $\pm0.08$} & $16.15${\scriptsize $\pm3.65$} \\
 & \textsc{GPT-Coach}  & $2.68${\scriptsize $\pm0.12$} & $2.97${\scriptsize $\pm0.12$} & $4.19${\scriptsize $\pm0.03$} & $3.28${\scriptsize $\pm0.08$} & $9.57${\scriptsize $\pm1.79$} \\
 & \textsc{SFT}        & $1.83${\scriptsize $\pm0.04$} & $2.13${\scriptsize $\pm0.04$} & $3.76${\scriptsize $\pm0.05$} & $2.57${\scriptsize $\pm0.03$} & $39.00${\scriptsize $\pm2.27$} \\
\midrule
\multirow{4}{*}{\textsc{OAI-Emotional}}
 & \textsc{DACT}       & $\mathbf{3.80}${\scriptsize $\pm0.06$} & $\mathbf{4.39}${\scriptsize $\pm0.05$} & $4.43${\scriptsize $\pm0.05$} & $\mathbf{4.21}${\scriptsize $\pm0.04$} & $\mathbf{0.26}${\scriptsize $\pm0.26$} \\
 & \textsc{GPT-Rubric} & $3.49${\scriptsize $\pm0.12$} & $4.32${\scriptsize $\pm0.13$} & $\mathbf{4.62}${\scriptsize $\pm0.09$} & $4.14${\scriptsize $\pm0.11$} & $6.52${\scriptsize $\pm1.66$} \\
 & \textsc{GPT-Coach}  & $2.99${\scriptsize $\pm0.14$} & $3.73${\scriptsize $\pm0.13$} & $4.40${\scriptsize $\pm0.06$} & $3.71${\scriptsize $\pm0.10$} & $7.80${\scriptsize $\pm2.06$} \\
 & \textsc{SFT}        & $2.03${\scriptsize $\pm0.05$} & $2.35${\scriptsize $\pm0.07$} & $3.49${\scriptsize $\pm0.05$} & $2.62${\scriptsize $\pm0.04$} & $13.84${\scriptsize $\pm1.90$} \\
\midrule
\multirow{4}{*}{\textsc{OAI-Resistant}}
 & \textsc{DACT}       & $\mathbf{3.74}${\scriptsize $\pm0.11$} & $\mathbf{3.86}${\scriptsize $\pm0.09$} & $4.04${\scriptsize $\pm0.07$} & $\mathbf{3.88}${\scriptsize $\pm0.08$} & $\mathbf{0.66}${\scriptsize $\pm0.36$} \\
 & \textsc{GPT-Rubric} & $3.70${\scriptsize $\pm0.12$} & $3.47${\scriptsize $\pm0.09$} & $\mathbf{4.20}${\scriptsize $\pm0.04$} & $3.79${\scriptsize $\pm0.07$} & $25.51${\scriptsize $\pm4.35$} \\
 & \textsc{GPT-Coach}  & $3.39${\scriptsize $\pm0.17$} & $3.17${\scriptsize $\pm0.10$} & $4.09${\scriptsize $\pm0.03$} & $3.55${\scriptsize $\pm0.10$} & $28.27${\scriptsize $\pm3.73$} \\
 & \textsc{SFT}        & $2.54${\scriptsize $\pm0.11$} & $2.51${\scriptsize $\pm0.09$} & $3.40${\scriptsize $\pm0.05$} & $2.82${\scriptsize $\pm0.08$} & $36.24${\scriptsize $\pm3.44$} \\
\midrule
\multirow{4}{*}{\textsc{OAI-Ambivalent}}
 & \textsc{DACT}       & $\mathbf{4.49}${\scriptsize $\pm0.06$} & $4.70${\scriptsize $\pm0.04$} & $4.70${\scriptsize $\pm0.05$} & $4.63${\scriptsize $\pm0.04$} & $\mathbf{0.00}${\scriptsize $\pm0.00$} \\
 & \textsc{GPT-Rubric} & $4.40${\scriptsize $\pm0.04$} & $\mathbf{4.76}${\scriptsize $\pm0.06$} & $\mathbf{4.92}${\scriptsize $\pm0.03$} & $\mathbf{4.69}${\scriptsize $\pm0.03$} & $1.97${\scriptsize $\pm0.64$} \\
 & \textsc{GPT-Coach}  & $3.91${\scriptsize $\pm0.11$} & $4.39${\scriptsize $\pm0.08$} & $4.87${\scriptsize $\pm0.03$} & $4.39${\scriptsize $\pm0.07$} & $0.84${\scriptsize $\pm0.46$} \\
 & \textsc{SFT}        & $2.60${\scriptsize $\pm0.07$} & $2.61${\scriptsize $\pm0.08$} & $3.91${\scriptsize $\pm0.05$} & $3.04${\scriptsize $\pm0.05$} & $12.73${\scriptsize $\pm1.40$} \\
\bottomrule
\end{tabular}

\vspace{2pt} 

\setlength{\tabcolsep}{6pt} 
\begin{tabular}{@{}lcccc@{}}
\toprule
\textit{4-cond avg} & \textsc{DACT} & \textsc{GPT-Rubric} & \textsc{GPT-Coach} & \textsc{SFT} \\
\midrule
\textit{mean3}   & $\mathbf{4.24}${\scriptsize $\pm0.03$} & $3.99${\scriptsize $\pm0.04$}  & $3.73${\scriptsize $\pm0.04$}  & $2.77${\scriptsize $\pm0.03$} \\
\textit{anti\%}  & $\mathbf{0.33}${\scriptsize $\pm0.13$} & $12.54${\scriptsize $\pm1.49$} & $11.62${\scriptsize $\pm1.16$} & $25.45${\scriptsize $\pm1.19$} \\
\bottomrule
\end{tabular}
\label{tab:main-results}
\end{table}

\textbf{Overall coaching quality.} 
\textsc{DACT} achieves the highest \textit{mean3} under three of the four client conditions, exceeding the second-best method by margins of $0.92$, $0.07$, and $0.09$ on \textsc{$R_8$-Client}, \textsc{OAI-Emotional}, and \textsc{OAI-Resistant}, respectively. The most pronounced gain appears on \textsc{$R_8$-Client}, where \textsc{DACT} reaches a \textit{mean3} of $4.25$, versus $3.33$ and $3.28$ for the two GPT baselines---nearly a full-point improvement on the $1$--$5$ scale. This gap persists even when the GPT baselines are enhanced with inference-time prompting that inlines the scoring rubric verbatim.
One possible explanation is that \textsc{$R_8$-Client} was intentionally trained, via our co-evolution strategy, to elicit low-quality coach responses. Consequently, the frozen GPT-5.5 struggles with this type of difficult client behavior even when the rubric is explicitly inlined, whereas \textsc{DACT} has acquired the ability to handle such behavior through co-training with such clients.

The per-dimension breakdown localizes the source of the gap. \textsc{DACT}'s advantage is concentrated in CCT and SST: on \textsc{$R_8$-Client}, it scores $3.88$ and $4.40$, versus \textsc{GPT-Rubric}'s $2.52$ and $3.01$, despite the rubric being inlined. Empathy, by contrast, is nearly matched across the strongest three methods, staying above $4.0$. This is consistent with the roles of the three dimensions: empathy mainly reflects listening accuracy that a strong frozen LLM can already provide, whereas CCT and SST capture timing-sensitive MI skills---appropriately eliciting change talk and maintaining emotional safety---that benefit more from preference training. Aggregated over all four conditions, \textsc{DACT} reaches a \textit{mean3} of $4.24$, outperforming the next-best baseline (\textsc{GPT-Rubric}, $3.99$) by $0.25$.

\textbf{Anti-pattern suppression.}
\textsc{DACT} produces MI anti-patterns at sentence-level rates of $0.41\%$, $0.26\%$, $0.66\%$, and $0.00\%$ across the four conditions, averaging only $0.33\%$. In contrast, the two GPT baselines are one to two orders of magnitude worse (\textsc{GPT-Rubric}: $12.54\%$, \textsc{GPT-Coach}: $11.62\%$ on average), and both exceed $25\%$ on \textsc{OAI-Resistant}. \textsc{SFT} performs even worse, with an average anti-pattern rate of $25.45\%$, indicating that SFT alone leaves substantial MI-inconsistent behavior that DPO co-evolution largely removes. Notably, on \textsc{OAI-Resistant}, \textsc{GPT-Rubric} attains a relatively high \textit{mean3} of $3.79$, only $0.09$ below \textsc{DACT}'s $3.88$, yet its anti-pattern rate reaches $25.51\%$, more than $38\times$ higher than \textsc{DACT}'s $0.66\%$. 

Thus, \textsc{DACT} improves coaching quality while simultaneously suppressing anti-patterns, achieving a $20$--$40\times$ reduction in anti-pattern rate relative to the next-best method across all conditions, which indicates robust performance across diverse client behaviors.

\noindent\textbf{Alignment with human expert.}
To verify that the coach is learning preferences consistent with expert MI judgment, we sample preference pairs from across the training trajectory using our LLM judge. These pairs are blinded and independently ranked by a trained MI practitioner. As shown in Appendix Table~\ref{tab:human-llm-agreement-perround}, the expert’s rankings agree strongly with those of our LLM judge, and this agreement remains relatively consistent across different levels of score discrepancy. 

Additional details and discussion are provided in Appendix~\ref{sec:appendix:judge-validation}.

\subsection{Co-Evolution Dynamics and Ablation}
\label{sec:co-evolution-dynamics}

We next examine how our co-evolution strategy progressively improves the coach agent. To isolate this effect, we evaluate the checkpoint $\pi_C^{k}$ from each iteration $k$ against a single fixed client: the iteration-$8$ co-evolved client adapter, denoted by $\pi_U^{8}$. We select this client using an SFT-fixed-coach probe (Appendix~\ref{sec:appendix:probe}): holding the SFT coach $\pi_C^{\mathrm{SFT}}$ fixed and evaluating it against each iteration's client $\pi_U^{k}$, we find that the iteration-$8$ client induces the highest fraction of coach utterances with CCT scores below $3$ ($89.5\%$) and the highest proportion of emotion-dominated client states ($82.5\%$ emotional turns). We therefore use $\pi_U^{8}$ as a challenging in-distribution test client for analyzing how the coach improves over co-evolution.

Figure~\ref{fig:trajectory} plots the \textit{mean3} and \textit{anti}\% trajectories of three methods: \textbf{\textsc{DACT}}, our full co-evolution approach in which both adapters are trained; \textbf{Client-Frozen}, an ablation that disables client co-evolution by freezing the client after SFT while continuing to train the coach with DPO; and \textbf{SFT}, which applies no DPO and is shown as a fixed horizontal reference. Full per-iteration trajectories, per-dimension breakdowns (Appendix Figure~\ref{fig:per-dim-trajectory}, Table~\ref{tab:per-dim-trajectory}), and the SFT-fixed-coach probe are provided in Appendix~\ref{sec:appendix:trajectory}.

\begin{figure}[t]
\centering
\includegraphics[width=0.49\linewidth]{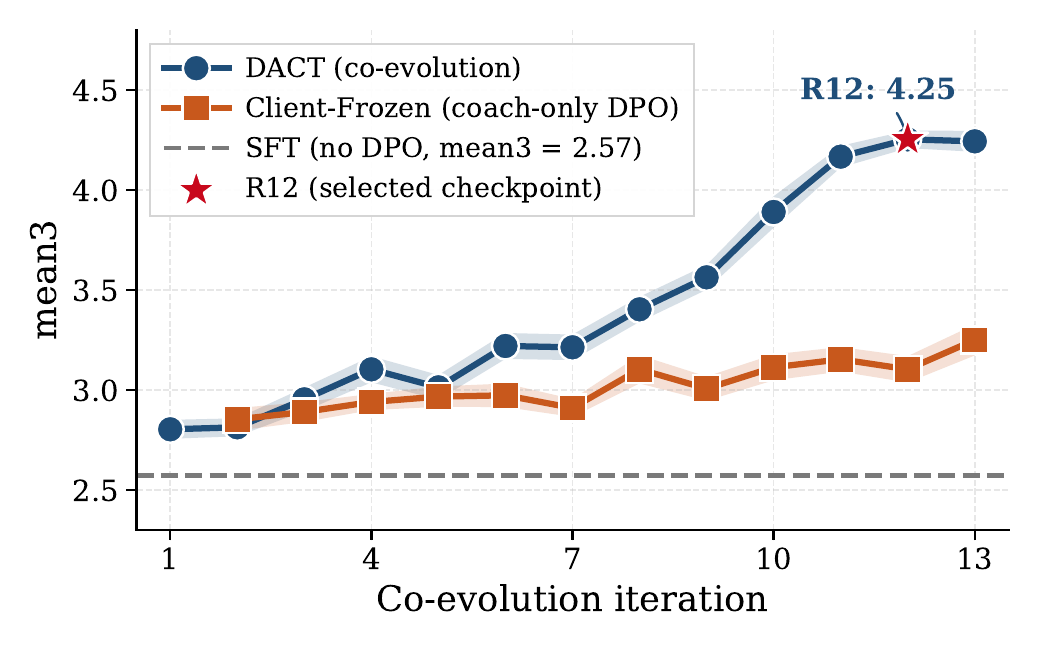}\hfill
\includegraphics[width=0.49\linewidth]{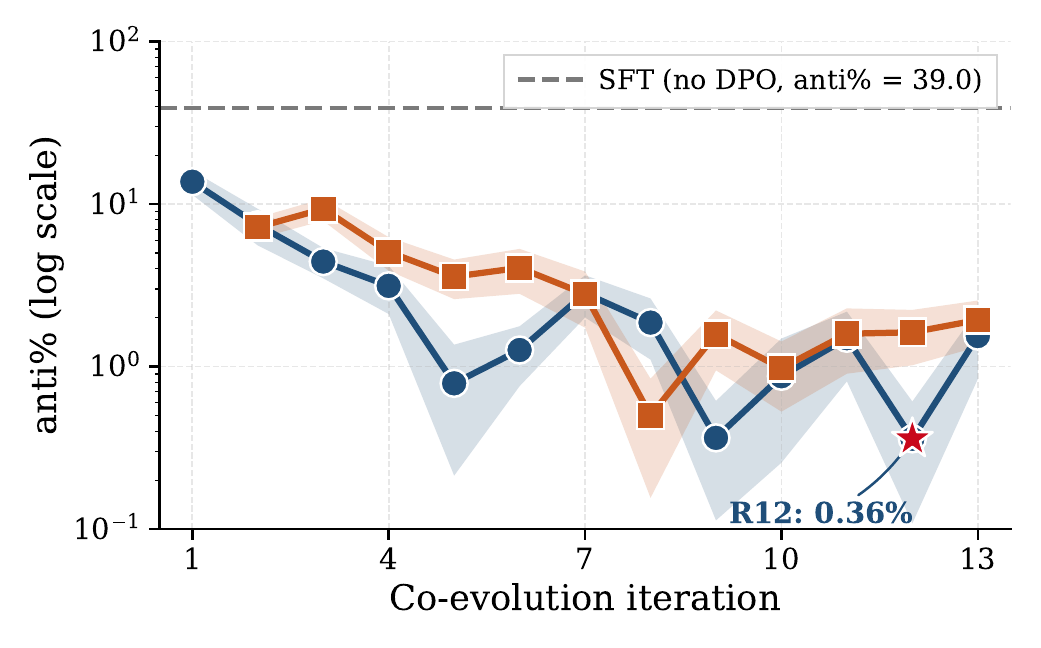}
\caption{\small Performance trajectory over co-evolution iterations, evaluated against fixed $\pi_U^{8}$ on the same $20$ held-out personas. We denote training iteration or round $k$ by $R_k$. \textbf{Left:} \textit{mean3}. \ours improves near-monotonically from $2.80$ at $R_1$ to $4.25$ at $R_{12}$ ($+1.45$, $+51.8\%$); Client-Frozen plateaus near $3.10$ from $R_5$ onward; SFT remains at $2.57$. \textbf{Right:} sentence-level \textit{anti}\% on a logarithmic axis. \ours drops from $13.73\%$ at $R_1$ to $0.36\%$ at $R_{12}$ ($38{\times}$ reduction); Client-Frozen plateaus between $1$--$2\%$ from $R_7$ onward; SFT remains at $39.0\%$. We used the checkpoint at $R_{12}$ (peak \textit{mean3} on this client) for the main comparison in Table~\ref{tab:main-results}. }
\label{fig:trajectory}
\end{figure}

\textbf{Client co-evolution is necessary.}
Client-Frozen provides a clean ablation of client adaptation: only the coach is trained, while the client remains fixed at its SFT initialization. Its performance plateaus at \textit{mean3} $2.90$--$3.25$ from $R_5$ onward and reaches only $3.10$ at $R_{12}$. In contrast, the gap between \ours and Client-Frozen grows from roughly ${+}0.06$ in the early iterations to $+1.15$ at $R_{12}$. By $R_{10}$, the coach in Client-Frozen has effectively stopped improving, whereas \ours continues to gain; by $R_{12}$, this gap accounts for most of \textsc{DACT}'s advantage over \textsc{GPT-Rubric} on \textsc{$R_8$-Client} ($+0.92$ in Table~\ref{tab:main-results}). This pattern is consistent with the intended role of co-evolution: a frozen SFT client cannot generate the increasingly difficult conversational situations required to keep improving a stronger coach. Once the coach in Client-Frozen masters the easier failure modes elicited by the SFT client, candidate utterances at each branching point receive similarly high judge scores, Pareto dominance no longer separates them, and the preference signal saturates. In \ours, by contrast, the co-evolved client is updated at every iteration to challenge the current coach more aggressively, so the difficulty of the comparisons scales with coach ability and the training signal remains informative.

\textbf{Anti-pattern rate drops over training.}
\ours reduces \textit{anti}\% from $13.73\%$ at $R_1$ to $0.36\%$ at $R_{12}$, a $38\times$ reduction. The decline is rapid: \textit{anti}\% falls to $1\%$ by $R_5$ and stays below $3\%$ thereafter. Client-Frozen also reduces \textit{anti}\% between $R_1$ and $R_5$, but then plateaus near $1$--$2\%$, never reaching the sub-$0.5\%$ floor achieved by \ours from $R_9$ onward.

\section{Conclusion}
\label{sec:conclusion}

We presented \textsc{DACT}, a dual-agent co-evolution framework in which a coach and a client adapter are jointly trained by DPO over preference trees scored against an absolute MI rubric. Across a matrix of coach methods and client conditions including both an in-distribution stress test and out-of-distribution client styles, the trained \textsc{DACT} coach achieves higher overall coaching quality than GPT-5.5 baselines given the same inference prompt and the LLM-judge rubric inlined into its prompt, and reduces sentence-level MI anti-pattern rate by more than an order of magnitude. A controlled ablation against a frozen-client variant shows that adaptive client co-evolution, rather than coach-only DPO, is the principal source of late-round training signal: with the client held fixed, coach quality plateaus well before the trained coach reaches its eventual ceiling.

Our results are obtained on a single base model, and the magnitude of \textsc{DACT}'s advantage on the in-distribution stress test may shift under expert MITI evaluation. Replication on additional base models and a full MITI expert evaluation are the most direct next steps.

\bibliography{ref.bib}
\bibliographystyle{apalike}

\newpage
\appendix
\onecolumn
\section*{Appendix}

\section{Proofs for the Stochastic-Game Interpretation}
\label{sec:appendix:game-interpretation}

In this appendix, we provide formal details for the stochastic-game interpretation stated in Section~\ref{sec:game-interpretation}.

We first introduce the scalarization used only for interpretation.

\begin{definition}[Strictly Pareto-monotone scalarization]
A function $\Phi:\mathbb{R}^d \to \mathbb{R}$ is said to be \emph{strictly Pareto-monotone} if for any $x,y\in\mathbb{R}^d$,
\[
\bigl(\forall i,\ x_i \ge y_i \bigr)\ \wedge\ \bigl(\exists j,\ x_j > y_j \bigr)
\quad \Longrightarrow \quad
\Phi(x) > \Phi(y).
\]
\end{definition}

This condition is mild: it only requires that improving any reward dimension without degrading the others must increase the scalarized utility. For example, any positive weighted sum $\Phi(x)=\sum_{i=1}^d w_i x_i$ with $w_i>0$ satisfies this property. Importantly, such a scalarization is introduced solely to characterize the latent game structure; our training algorithm does not optimize any fixed scalarized reward.

Given a dialogue node $n$, let
\[
Q(n)=\bigl(Q_1(n),\ldots,Q_d(n)\bigr)\in \mathbb{R}^d
\]
denote its multi-dimensional backed-up return. We define the latent coach and client utilities as
\begin{equation}
    U_C(n) = \Phi(Q(n)),
    \qquad
    U_U(n) = -\,\Phi(Q(n)).
    \label{eq:latent-utility-appendix}
\end{equation}

\begin{lemma}[Coach-side utility consistency]
\label{lem:coach-consistency-app}
If two comparable coach nodes $A$ and $B$ satisfy $A \succ B$ in the strict Pareto sense, namely
\[
\forall d\in\mathcal{D}:\ Q_d(A)\ge Q_d(B)
\qquad\text{and}\qquad
\exists d\in\mathcal{D}:\ Q_d(A)>Q_d(B),
\]
then
\[
U_C(A) > U_C(B).
\]
\end{lemma}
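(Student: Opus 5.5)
The plan is to unfold the definitions and apply strict Pareto-monotonicity of $\Phi$ once. Set $x = Q(A)\in\mathbb{R}^d$ and $y = Q(B)\in\mathbb{R}^d$, where $Q(\cdot)$ is the vector of backed-up returns from Eq.~\eqref{eq:bellman}.

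The hypothesis $A \succ B$ in the sense of Eq.~\eqref{eq:pareto} says exactly two things. First, $x_i \ge y_i$ for every $i\in\mathcal{D}$. Second, $x_j > y_j$ for some $j\in\mathcal{D}$. This is precisely the antecedent in the definition of a strictly Pareto-monotone scalarization, so that definition gives $\Phi(x) > \Phi(y)$.

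By Eq.~\eqref{eq:latent-utility-appendix}, $U_C(A) = \Phi(Q(A)) = \Phi(x)$ and $U_C(B) = \Phi(Q(B)) = \Phi(y)$. Hence $U_C(A) > U_C(B)$.

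Two bookkeeping points need checking. The index set $D$ in Eq.~\eqref{eq:pareto} must be identified with $\mathcal{D}=\{1,\ldots,d\}$. The vectors $Q(A)$ and $Q(B)$ must be well-defined real vectors. Both hold because Eq.~\eqref{eq:bellman} is a finite recursion over a finite tree with bounded rewards.

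The "comparable" qualifier (same tree level) plays no role in the implication itself. It only restricts which pairs enter the dataset, so the proof does not need it.

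There is no real obstacle here: the lemma is essentially a restatement of the definition of $\Phi$ composed with $Q$. The only content is that the direction of the Pareto order used for coach preferences matches the direction in the monotonicity definition, with no sign flip, unlike the client-side Lemma~\ref{lem:client-consistency-main}.
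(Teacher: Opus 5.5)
Your proposal is correct and follows the paper's proof exactly: you identify the Pareto hypothesis with the antecedent of strict Pareto-monotonicity of $\Phi$, conclude $\Phi(Q(A))>\Phi(Q(B))$, and rewrite both sides via $U_C(n)=\Phi(Q(n))$. Your extra bookkeeping remarks (identifying $D$ with $\mathcal{D}$, well-definedness of $Q$, and that comparability plays no role) are harmless additions that the paper leaves implicit.
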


\begin{proof}
By assumption, $Q(A)$ weakly dominates $Q(B)$ in every dimension and strictly dominates it in at least one dimension. By strict Pareto-monotonicity of $\Phi$,
\[
\Phi(Q(A)) > \Phi(Q(B)).
\]
Using the definition of the coach utility in Eq.~\ref{eq:latent-utility-appendix}, we obtain
\[
U_C(A)=\Phi(Q(A)) > \Phi(Q(B))=U_C(B).
\]
\end{proof}

Lemma~\ref{lem:coach-consistency-app} shows that every Pareto-dominant coach pair is consistent with a latent utility-maximizing player. Thus, the coach-side preference construction is not merely heuristic: it always agrees with an underlying scalar utility that is monotone in all coaching dimensions.

\begin{lemma}[Client-side adversarial consistency]
\label{lem:client-consistency-app}
Let $u_1$ and $u_2$ be two comparable client nodes, and recall that the challenge vector is defined by
\[
S(u)= -Q(u).
\]
If
\[
S(u_1)\succ S(u_2)
\]
in the strict Pareto sense, then
\[
U_U(u_1) > U_U(u_2),
\]
or equivalently,
\[
U_C(u_1) < U_C(u_2).
\]
\end{lemma}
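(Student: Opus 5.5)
The plan is to reduce the client-side statement to the coach-side one, Lemma~\ref{lem:coach-consistency-app}, by unpacking the sign flip in $S(u)=-Q(u)$. The hypothesis $S(u_1)\succ S(u_2)$ says that for every $d\in\mathcal{D}$ we have $-Q_d(u_1)\ge -Q_d(u_2)$, and for some $j$ we have $-Q_j(u_1)> -Q_j(u_2)$. Multiplying each inequality by $-1$ reverses it. This gives $Q_d(u_2)\ge Q_d(u_1)$ for all $d$ and $Q_j(u_2)>Q_j(u_1)$. In other words, $Q(u_2)$ strictly Pareto-dominates $Q(u_1)$ in exactly the sense required by the definition of a strictly Pareto-monotone scalarization.

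Next I apply strict Pareto-monotonicity of $\Phi$ with $x=Q(u_2)$ and $y=Q(u_1)$, which yields $\Phi(Q(u_2))>\Phi(Q(u_1))$. By the definition of the latent utilities in Eq.~\ref{eq:latent-utility-appendix}, this is precisely $U_C(u_2)>U_C(u_1)$, i.e.\ $U_C(u_1)<U_C(u_2)$. This is the same computation as in the proof of Lemma~\ref{lem:coach-consistency-app}, only with the roles of the two nodes swapped. That lemma is phrased for coach nodes, but its proof uses only the vectors $Q(\cdot)$, so the argument transfers verbatim to client nodes; there is no need to cite it formally.

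Finally, since $U_U=-U_C$ pointwise, multiplying $U_C(u_1)<U_C(u_2)$ by $-1$ gives $U_U(u_1)>U_U(u_2)$. The same negation shows the two conclusions are equivalent, since each is the negation of the other.

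There is no real obstacle here. The only point requiring care is bookkeeping: the hypothesis concerns dominance of the \emph{challenge} vectors, so the inequalities must be flipped consistently, and the Pareto-monotonicity of $\Phi$ must be applied to $Q$ rather than to $S$. The latter matters because $\Phi$ is not assumed to behave in any particular way under negation of its argument.
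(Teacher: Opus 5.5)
Your proof is correct and matches the paper's argument step for step: flip the inequalities to see that $Q(u_2)$ strictly Pareto-dominates $Q(u_1)$, apply strict Pareto-monotonicity of $\Phi$ to $Q$ rather than $S$, and negate. The only cosmetic difference is the order of the conclusions: you derive $U_C(u_1)<U_C(u_2)$ first, while the paper derives $U_U(u_1)>U_U(u_2)$ first. One wording nit: the two conclusions are not logical ``negations'' of each other; each follows from the other by multiplying through by $-1$, using $U_U=-U_C$.
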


\begin{proof}
The condition $S(u_1)\succ S(u_2)$ means
\[
\forall d\in\mathcal{D}:\ -Q_d(u_1)\ge -Q_d(u_2)
\qquad\text{and}\qquad
\exists d\in\mathcal{D}:\ -Q_d(u_1)> -Q_d(u_2).
\]
Equivalently,
\[
\forall d\in\mathcal{D}:\ Q_d(u_1)\le Q_d(u_2)
\qquad\text{and}\qquad
\exists d\in\mathcal{D}:\ Q_d(u_1)< Q_d(u_2),
\]
so $Q(u_2)$ strictly Pareto-dominates $Q(u_1)$. By strict Pareto-monotonicity of $\Phi$,
\[
\Phi(Q(u_2)) > \Phi(Q(u_1)),
\]
which implies
\[
-\Phi(Q(u_1)) > -\Phi(Q(u_2)).
\]
Using Eq.~\ref{eq:latent-utility-appendix}, we obtain
\[
U_U(u_1) > U_U(u_2).
\]
Equivalently, since $U_U=-U_C$ under the same scalarization,
\[
U_C(u_1) < U_C(u_2).
\]
\end{proof}

Lemma~\ref{lem:client-consistency-app} formally captures the implicit adversarial effect of our client-side preference construction: the client is trained to prefer utterances that reduce the coach's downstream utility across all target dimensions.

We can now state the overall interpretation.

\begin{proposition}[Natural stochastic-game interpretation]
\label{prop:game-interpretation-app}
Consider the two-player stochastic game induced by the dialogue state space $\mathcal{S}$, the coach and client action spaces $(\mathcal{A}^C,\mathcal{A}^U)$, the transition dynamics $P$, and the latent utilities $(U_C,U_U)$ defined in Eq.~\ref{eq:latent-utility-appendix}. Then the coach-side and client-side preference datasets constructed by our algorithm are utility-consistent with $U_C$ and $U_U$, respectively. Consequently, alternating DPO updates for the coach and the client can be interpreted as alternating KL-regularized policy-improvement steps for two adversarially coupled players in this stochastic game.
\end{proposition}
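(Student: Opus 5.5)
The proposition has two parts: a formal consistency claim, and an interpretive consequence. I will prove them in that order.

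\textbf{Consistency.} I will say a preference dataset $\mathcal{P}$ is utility-consistent with $U$ if every pair $(y_w,y_l)\in\mathcal{P}$ satisfies $U(y_w)>U(y_l)$. Here $y_w$ and $y_l$ are comparable nodes, meaning siblings or same-depth nodes under the same prefix.

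For the coach dataset, the algorithm inserts $(A,B)$ only when $A\succ B$ in the sense of Eq.~\ref{eq:pareto}. Lemma~\ref{lem:coach-consistency-app} then gives $U_C(A)>U_C(B)$ directly.

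For the client dataset, a pair $(u_1,u_2)$ is inserted only when Eq.~\ref{eq:challenges} holds, that is, $S(u_1)\succ S(u_2)$. Lemma~\ref{lem:client-consistency-app} then gives $U_U(u_1)>U_U(u_2)$.

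Since these are the only ways pairs enter either dataset, both datasets are utility-consistent. This holds for any strictly Pareto-monotone $\Phi$, for example a positive weighted sum. The fact that $U_U=-U_C$ makes the game constant-sum (zero-sum) at the level of latent utilities, which justifies calling the two players adversarially coupled.

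\textbf{Policy improvement.} I will recall the standard DPO derivation of \citet{rafailov2023dpo}. Under a Bradley--Terry model with latent reward $r(x,y)$, the DPO loss in Eq.~\ref{eq:dpo} is minimized by the solution of
\[
\max_{\pi}\ \mathbb{E}_{y\sim\pi(\cdot\mid x)}[r(x,y)]-\beta\,\mathrm{KL}\bigl(\pi(\cdot\mid x)\,\|\,\pi_{\text{ref}}(\cdot\mid x)\bigr),
\]
namely $\pi^*(y\mid x)\propto\pi_{\text{ref}}(y\mid x)\exp(r(x,y)/\beta)$.

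Take $r=U_C$ for the coach with the client held fixed. Because the preference labels are ordered consistently with $U_C$ by the first step, the coach's DPO step can be read as a KL-regularized improvement step toward $U_C$ relative to $\pi_C^{k-1}$. Symmetrically, with the coach held fixed, the client's DPO step is a KL-regularized improvement step toward $U_U$ relative to its reference policy. Each player's environment is the other's current policy through $P(s_{t+1}\mid s_t,c_t)=\pi_U(u_{t+1}\mid s_t,c_t)$. Alternating these steps is therefore alternating regularized best-response (policy-improvement) dynamics in the stochastic game $(\mathcal{S},\mathcal{A}^C,\mathcal{A}^U,P,U_C,U_U)$.

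\textbf{Main obstacle.} The hard part is stating the second step honestly. Utility-consistency only guarantees that the labels are ordered like $U$. It does not make them Bradley--Terry samples with reward exactly $U$, and $Q$ is a finite-rollout estimate rather than a true value. I would therefore state the second claim as an interpretation: there exists a latent reward consistent with the data, and it is monotone in $U$. It is not an exact optimality result.

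Concretely, I would note that any Bradley--Terry reward $r=g\circ U$ with $g$ strictly increasing is compatible with the observed pairs. The claim would then be restricted to the corresponding KL-regularized objective, and I would flag estimation error in $Q$ as outside the scope of the statement.
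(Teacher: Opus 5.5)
Your proposal is correct and follows essentially the same route as the paper: Lemma~\ref{lem:coach-consistency-app} and Lemma~\ref{lem:client-consistency-app} give utility-consistency of the coach and client datasets, the standard KL-regularized reading of DPO gives the policy-improvement interpretation, and $U_U=-U_C$ supplies the adversarial coupling. Your additions make precise what the paper leaves at ``can be interpreted as'': you recall the closed form $\pi^*\propto\pi_{\text{ref}}\exp(r/\beta)$, and you note that the labels only fix the ordering of $U$, so any $g\circ U$ with $g$ strictly increasing is an admissible latent reward. This matches the paper's own remark that the result is an interpretation, not an optimality or equilibrium claim.
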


\begin{proof}
Utility consistency of the coach-side preference pairs follows from Lemma~\ref{lem:coach-consistency-app}, while utility consistency of the client-side preference pairs follows from Lemma~\ref{lem:client-consistency-app}. Under the standard latent-utility view of pairwise preference optimization, DPO increases the relative likelihood of chosen actions over rejected ones with respect to a reference policy, and can therefore be interpreted as a KL-regularized policy-improvement step toward higher latent utility. Applying DPO to the coach-side dataset improves the coach policy with respect to $U_C$, while applying DPO to the client-side dataset improves the client policy with respect to $U_U$. Because $U_U$ is defined as the negative scalarized downstream coaching quality, the two policy updates are adversarially coupled. Therefore, the alternating co-training procedure admits a natural stochastic-game interpretation.
\end{proof}

\paragraph{Discussion.}
We emphasize that Proposition~\ref{prop:game-interpretation-app} is an interpretation result rather than a claim that our algorithm exactly solves the stochastic game or converges to a Nash equilibrium. Instead, it shows that the preference pairs used by our method are consistent with a pair of opposing latent utilities, and that the resulting alternating DPO updates can therefore be understood as a game-theoretic learning dynamic.

\section{Details of Reward Scales}
\label{sect:appendix:reward-scale}

We use a three-dimensional reward signal,
\[
\mathbf{R} = (r_1, r_2, r_3),
\]
to co-train the health coach and client agents. These three dimensions are selected based on the MITI 4.2.1 coding manual~\citep{moyers2014motivational}: \emph{Cultivating Change Talk} (CCT), \emph{Softening Sustain Talk} (SST), and \emph{Empathy}.

Each reward component is scored on a five-point scale,
\[
r_i \in \{1,2,3,4,5\}.
\]
The scale is ordinal and should not be interpreted as symmetric around an ``average'' level. In particular, a score of~3 corresponds to \emph{neutral} motivational interviewing (MI) behavior, meaning that the coach's response is neither clearly helpful nor clearly harmful from an MI perspective. Scores of~4--5 indicate MI-adherent behavior, where the coach actively cultivates change talk, softens sustain talk, or demonstrates empathic attunement. By contrast, scores of~1--2 indicate \emph{MI-non-adherent} behavior, meaning that the coach exhibits clinically undesirable behaviors such as arguing for change, giving unsolicited advice, or dismissing the client’s emotions.

Importantly, a score of~2 should not be interpreted as merely ``below-average MI''. Rather, it reflects the presence of an anti-pattern that a trained MI practitioner would regard as a meaningful clinical error. We therefore treat the lower end of the scale as qualitatively different from the neutral point, rather than as a symmetric numerical deviation from it.

Finally, we do not aggregate the reward vector into a single scalar reward. The three dimensions capture distinct and clinically meaningful aspects of MI competence, and, to our knowledge, the clinical literature does \textit{not} provide a principled basis for assigning fixed trade-off weights among them. The detailed dimension-specific scoring rubric is provided in Appendix~\ref{sec:appendix:judge}.

\begin{algorithm}[t]
\caption{\ours: Dual-Agent Co-Training via
Implicit Adversarial Preference Optimization}
\label{alg:main}
\begin{algorithmic}[1]
\Require SFT policies $\pi_C^0, \pi_U^0$; LLM judge $\mathcal{J}$; discount $\gamma$; iterations $K$; trees per iteration $N$; candidates per decision point $M$
\For{$k = 1, \ldots, K$}
    \State \textbf{// Adversarial Preference Tree Generation (joint sampling)}
    \For{$i = 1, \ldots, N$}
        \State Generate adversarial preference tree $\mathcal{T}_i$: coach samples $M$ candidates per decision point, client samples $M$ responses per candidate
    \EndFor
    \State \textbf{// Multi-Dimensional Scoring}
    \For{each coach utterance $c$ in $\{\mathcal{T}_i\}$}
        \State $(r_{\textsc{cct}}, r_{\textsc{sst}}, r_{\textsc{emp}}) \leftarrow \mathcal{J}(c, h)$ \hfill \textit{// sentence-level labels + rules}
    \EndFor
    \State $Q_d(n) \leftarrow$ per-dim Bellman recursion (Eq.~\ref{eq:bellman}) \hfill \textit{// Q-aggregation: mean over child samples}
    \State \textbf{// Preference Pair Extraction}
    \State $\mathcal{P}_C^k \leftarrow$ Strict Pareto coach pairs (Eq.~\ref{eq:pareto})
    \State $\mathcal{P}_U^k \leftarrow$ Challenge-ranked client pairs (Eq.~\ref{eq:challenges})
    \State \textbf{// Co-Evolution Step (near-on-policy)}
\State $\pi_C^k \leftarrow \text{DPO}\!\left(\pi_C^{k-1};\ \mathcal{P}_C^k;\ \pi_C^{k-1}\right)$ \hfill \textit{// coach: rolling reference}
\State $\pi_U^k \leftarrow \text{DPO}\!\left(\pi_U^{k-1};\ \mathcal{P}_U^k;\ \pi_U^{\mathrm{SFT}}\right)$ \hfill \textit{// client: fixed SFT reference}

\EndFor
\State \Return $\{\pi_C^k, \pi_U^k\}_{k=1}^{K}$
\end{algorithmic}
\end{algorithm}

\section{Validating the LLM Judge against Human Expert Rankings}
\label{sec:appendix:judge-validation}

The DPO signal in our pipeline is built entirely from LLM-judge scores. To check whether this preference signal aligns with how a trained MI expert would rank the same pairs, we ran a blinded pair-ranking study on $80$ preference pairs sampled from across the trajectory.

\paragraph{Sampling protocol.}
We sampled four \textsc{DACT} co-evolution iterations spread across the trajectory: $R_3$, $R_6$, $R_9$, and $R_{12}$ to cover early, middle, late and final stages of co-evolution. $R_9$ is the iteration immediately following the production of $\pi^{8}_{U}$, our trajectory probe baseline. For each iteration we drew $20$ preference pairs from the trees generated at that iteration, sampled by ranking all surviving Pareto-dominant pairs by sum-gap $\sum_d (Q_d(c_w) - Q_d(c_l))$ and taking the top of this list, so that the sampled pairs reflect the strongest training signals the iteration produced.

\paragraph{Blinding.}
A trained MI practitioner served as the rater. Iterations were anonymized: the four iterations were randomly mapped to abstract group labels under a fixed seed, and the iteration index was hidden from every page the rater saw. Within each ranking task, the two coach candidates were presented in a randomized order and labeled only \emph{First} and \emph{Second}; the LLM judge's scores, sentence-level function labels, per-dimension reasoning, and the preferred candidate were all withheld. The rater saw only the client persona JSON, the dialogue history up to the branching point, the two candidate coach replies, and the scoring rubric from Section~\ref{sec:appendix:judge}.

\paragraph{Results.}
Table~\ref{tab:human-llm-agreement-perround} reports the agreement rate per iteration. Across the $80$ pairs, the rater agreed with the LLM judge's Pareto-induced preference on $62$ pairs ($77.5\%$). The per-iteration breakdown reveals a downward trend across the trajectory: agreement is highest on $R_3$ ($90.0\%$) and decreases to $65.0\%$ on $R_9$ before partially recovering to $75.0\%$ on $R_{12}$. We hypothesize this trend reflects the changing structure of preference pairs as training progresses. In early iterations, both coach candidates at each branching point span a wide quality range---one candidate may make a clear MI mistake (e.g., premature planning, unsolicited advice) while the other does not, and the LLM judge and the human rater agree that the cleaner candidate is preferred. As the coach improves over training, both candidates at each branching point become broadly competent, the differences narrow to subtle distinctions of timing, phrasing, or reflection depth, and the judge--rater agreement drops correspondingly. This is the same saturation phenomenon that drives the Client-Frozen plateau in Figure~\ref{fig:trajectory}, viewed from the judge-validation side: when the candidate distribution narrows, the LLM judge's preference becomes harder to anchor in clear rubric violations and increasingly relies on subtler features that the rater may weight differently. Client co-evolution is what slows this collapse: as the coach masters the easier failure modes a fixed SFT client elicits, an updated client adapter introduces new conversational situations the coach has not yet learned to handle, regenerating clear-cut MI mistakes at each iteration's branching points and keeping the candidate quality range wide enough for both the LLM judge and a human rater to make reliable preference judgments. The $R_{12}$ partial recovery in agreement ($75.0\%$, up from $R_9$'s $65.0\%$) is consistent with this account: at $R_{12}$ the co-evolved client has shifted distribution again (Appendix~\ref{sec:appendix:probe} Table~\ref{tab:probe} shows the emotional-state share dropping from $82.5\%$ at $R_8$ to $52.0\%$ at $R_{12}$), reintroducing the kinds of branching-point situations where coach candidates separate clearly.

\begin{table}[h]
\centering
\caption{\small Human-LLM agreement on Pareto-dominant preference pairs sampled from four \textsc{DACT} co-evolution iterations. \textbf{All 20} reports agreement on the full $20$-pair sample per iteration. \textbf{Top-10 (high-gap)} restricts to the $10$ pairs with the largest sum-gap $\sum_d (Q_d(c_w) - Q_d(c_l))$, the pairs the LLM judge is most confident about.}
\centering
\small
\vspace{0.1in}
\renewcommand{\arraystretch}{1.0}
\setlength{\tabcolsep}{6pt}
\begin{tabular}{@{}lcc@{}}
\toprule
\textbf{Client iteration} & \textbf{Top-10 (high-gap)} & \textbf{All 20} \\
\midrule
$R_3$   & $9/10 = 90.0\%$ & $18/20 = 90.0\%$ \\
$R_6$   & $8/10 = 80.0\%$ & $16/20 = 80.0\%$ \\
$R_9$   & $7/10 = 70.0\%$ & $13/20 = 65.0\%$ \\
$R_{12}$  & $7/10 = 70.0\%$ & $15/20 = 75.0\%$ \\
\midrule
\textbf{Total} & $\mathbf{31/40 = 77.5\%}$ & $\mathbf{62/80 = 77.5\%}$ \\
\bottomrule
\end{tabular}
\label{tab:human-llm-agreement-perround}
\end{table}

\section{Per-iteration trajectory tables}
\label{sec:appendix:trajectory}

Table~\ref{tab:trajectory-mean3} reports the \textit{mean3} trajectory for the three training conditions \ours, Client-Frozen, and SFT across all $K{=}13$ iterations, evaluated against fixed $\pi_U^{8}$ on the same $20$ held-out personas. Table~\ref{tab:per-dim-trajectory} decomposes \ours and Client-Frozen by dimension, exposing the asynchronous dynamics across the three MI dimensions that underlie the aggregate \textit{mean3} curve. Figure~\ref{fig:per-dim-trajectory} plots the same data for visual comparison.

\begin{table}[h]
\centering
\caption{\small \textit{\textit{mean3}} trajectory across $K{=}13$ co-evolution iterations. \ours is the full \textsc{DACT} co-evolution; Client-Frozen is coach-only DPO with frozen SFT client; SFT is the pre-DPO reference (a single fixed model, repeated for plotting convenience). All cells evaluated against fixed $\pi_U^{8}$ on the same $20$ held-out personas. By construction, DACT and Client-Frozen share the same 
coach update at $R_1$ (both use the SFT client for tree generation), 
so their $R_1$ scores are identical. The two methods diverge from 
$R_2$ onward. The $R_{12}$ row is bolded as the final reported \textsc{DACT} checkpoint.}
\centering
\vspace{0.1in}
\small
\renewcommand{\arraystretch}{1.0}
\begin{tabular}{@{}lcccc@{}}
\toprule
\textbf{Iteration} & \textbf{\ours} & \textbf{Client-Frozen} & \textbf{SFT} & \textbf{\ours $-$ Client-Frozen} \\
\midrule
$R_1$   & $2.80$ & $2.80$    & $2.57$ & $0.00$ \\
$R_2$   & $2.81$ & $2.85$ & $2.57$ & $-0.04$ \\
$R_3$   & $2.95$ & $2.89$ & $2.57$ & $+0.06$ \\
$R_4$   & $3.10$ & $2.94$ & $2.57$ & $+0.16$ \\
$R_5$   & $3.01$ & $2.97$ & $2.57$ & $+0.04$ \\
$R_6$   & $3.22$ & $2.97$ & $2.57$ & $+0.25$ \\
$R_7$   & $3.21$ & $2.91$ & $2.57$ & $+0.30$ \\
$R_8$   & $3.40$ & $3.10$ & $2.57$ & $+0.30$ \\
$R_9$   & $3.56$ & $3.01$ & $2.57$ & $+0.55$ \\
$R_{10}$  & $3.89$ & $3.11$ & $2.57$ & $+0.78$ \\
$R_{11}$  & $4.17$ & $3.15$ & $2.57$ & $+1.02$ \\
\textbf{$R_{12}$}  & $\mathbf{4.25}$ & $3.10$ & $2.57$ & $\mathbf{+1.15}$ \\
$R_{13}$  & $4.24$ & $3.25$ & $2.57$ & $+0.99$ \\
\bottomrule
\end{tabular}
\label{tab:trajectory-mean3}
\end{table}

\begin{table}[h]
\centering
\caption{\small Per-dimension trajectory for \ours and Client-Frozen across $K{=}13$ co-evolution iterations, evaluated against fixed $\pi_U^{8}$ on the same $20$ held-out personas. CCT, SST, and Empathy are the three MITI 4.2.1 dimensions, scored by the LLM judge on a $1$--$5$ scale. The $R_{12}$ row is bolded for \ours as the final reported \textsc{DACT} checkpoint. The SFT baseline (no DPO, single fixed model) is reported once at the bottom and is the reference for both \ours and Client-Frozen. The asymmetry across dimensions is visible: under \ours, CCT is approximately flat through $R_9$ ($2.16 \to 2.66$, $+0.50$) then jumps from $2.66$ at $R_9$ to $3.18$ at $R_{10}$ and reaches $3.84$ at $R_{12}$ ($+1.18$ in three iterations); SST rises near-monotonically from $2.29$ at $R_1$ to $4.43$ at $R_{12}$; Empathy stays in a narrow band between $3.96$ and $4.51$ throughout. Under Client-Frozen, all three dimensions plateau well below their \ours counterparts after $R_5$, with the largest gap on SST ($\Delta = 1.72$ at $R_{12}$) and the smallest on Empathy ($\Delta = 0.25$ at $R_{12}$). These numbers are from a separate evaluation run from Table~\ref{tab:main-results}.}
\centering
\vspace{0.1in}
\small
\renewcommand{\arraystretch}{1.0}
\setlength{\tabcolsep}{4pt}
\begin{tabular}{@{}lcccccc@{}}
\toprule
& \multicolumn{3}{c}{\textbf{\ours}} & \multicolumn{3}{c}{\textbf{Client-Frozen}} \\
\cmidrule(lr){2-4} \cmidrule(lr){5-7}
\textbf{Iteration} & \textbf{CCT} & \textbf{SST} & \textbf{Emp.} & \textbf{CCT} & \textbf{SST} & \textbf{Emp.} \\
\midrule
$R_1$   & $2.16$ & $2.29$ & $3.96$ & $2.16$    & $2.29$    & $3.96$    \\
$R_2$   & $2.16$ & $2.30$ & $3.98$ & $2.17$ & $2.40$ & $3.99$ \\
$R_3$   & $2.28$ & $2.55$ & $4.03$ & $2.28$ & $2.40$ & $3.99$ \\
$R_4$   & $2.41$ & $2.78$ & $4.12$ & $2.28$ & $2.54$ & $4.00$ \\
$R_5$   & $2.26$ & $2.68$ & $4.10$ & $2.30$ & $2.58$ & $4.02$ \\
$R_6$   & $2.41$ & $3.10$ & $4.15$ & $2.31$ & $2.52$ & $4.09$ \\
$R_7$   & $2.35$ & $2.92$ & $4.37$ & $2.22$ & $2.47$ & $4.04$ \\
$R_8$   & $2.43$ & $3.36$ & $4.42$ & $2.44$ & $2.74$ & $4.13$ \\
$R_9$   & $2.66$ & $3.57$ & $4.46$ & $2.31$ & $2.59$ & $4.12$ \\
R$10$  & $3.18$ & $3.98$ & $4.51$ & $2.27$ & $2.68$ & $4.39$ \\
$R_{11}$  & $3.73$ & $4.32$ & $4.45$ & $2.44$ & $2.74$ & $4.28$ \\
\textbf{$R_{12}$}  & $\mathbf{3.84}$ & $\mathbf{4.43}$ & $\mathbf{4.49}$ & $2.36$ & $2.71$ & $4.24$ \\
$R_{13}$  & $3.88$ & $4.41$ & $4.44$ & $2.44$ & $2.92$ & $4.39$ \\
\midrule
\textit{SFT}  & \multicolumn{3}{c}{\textit{CCT $1.83$,\quad SST $2.13$,\quad Emp. $3.76$ (fixed)}} \\
\bottomrule
\end{tabular}
\label{tab:per-dim-trajectory}
\end{table}

\begin{figure}[t]
\centering
\includegraphics[width=0.32\linewidth]{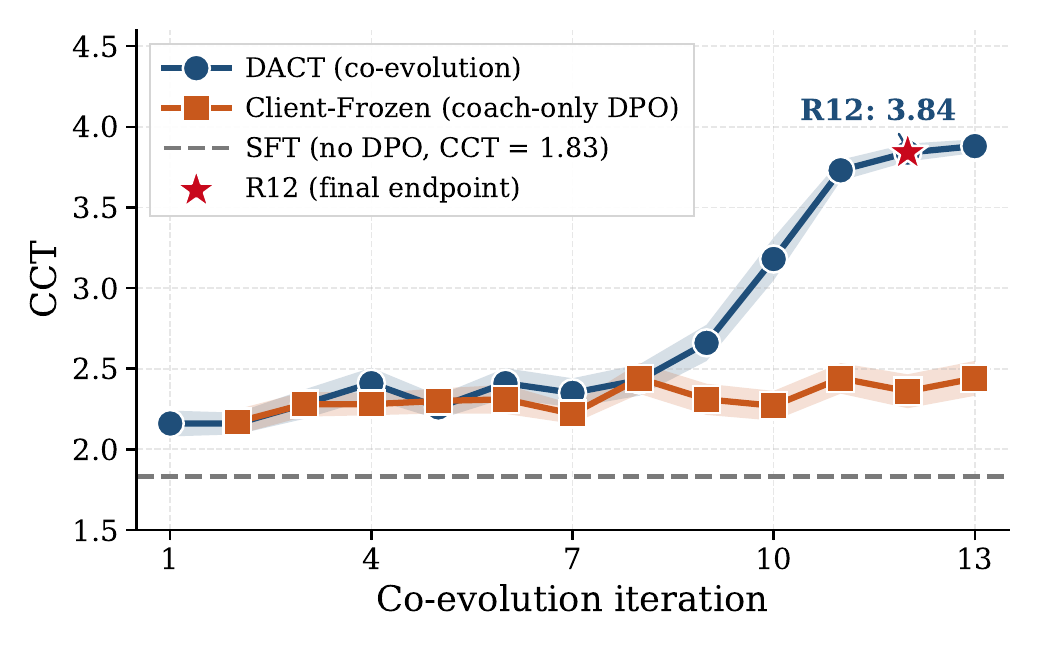}\hfill
\includegraphics[width=0.32\linewidth]{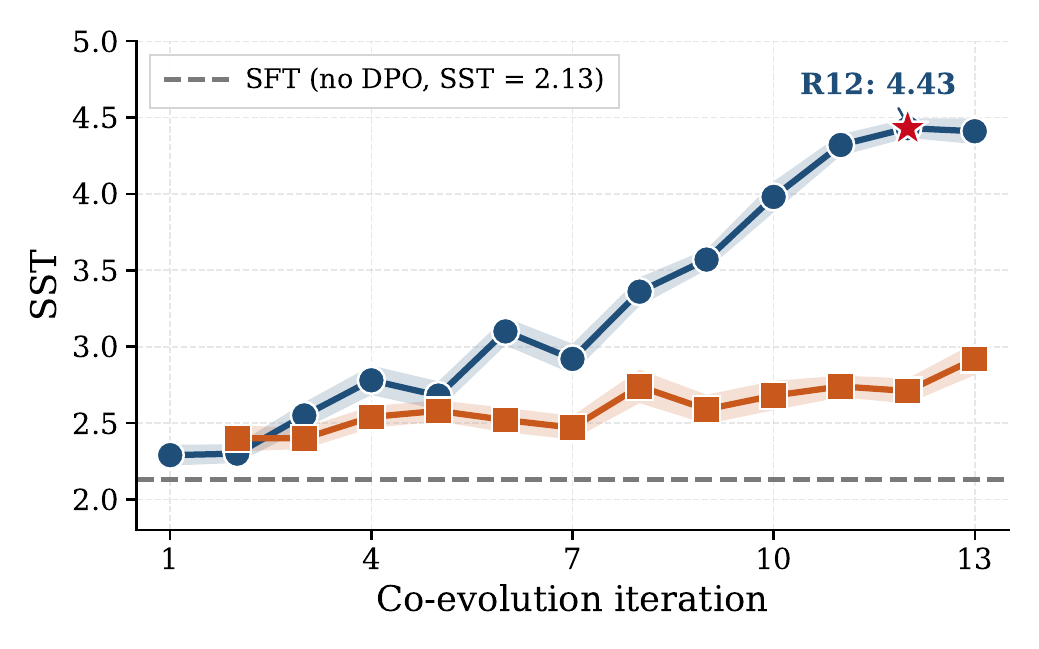}\hfill
\includegraphics[width=0.32\linewidth]{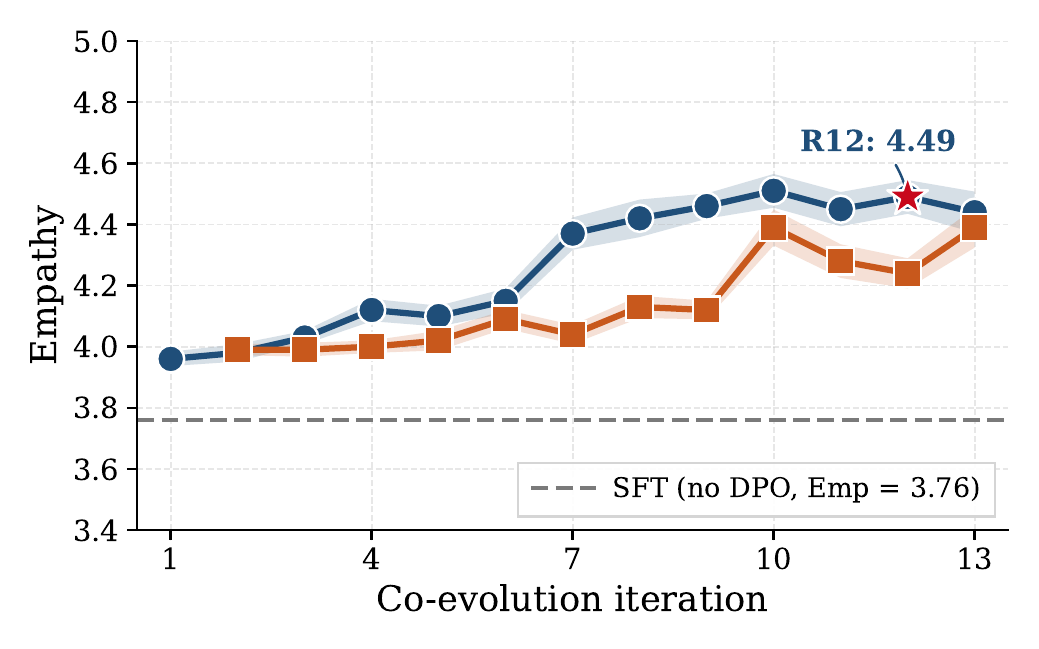}
\caption{\small Per-dimension trajectory of conditions \ours, Client-Frozen, and SFT across $K{=}13$ co-evolution iterations. \textbf{Left (CCT):} \ours stays within $\pm 0.3$ of Client-Frozen through $R_8$, then jumps from $2.66$ at $R_9$ to $3.18$ at $R_{10}$ and reaches $3.84$ at $R_{12}$. \textbf{Middle (SST):} \ours rises substantially from $2.29$ at $R_1$ to $4.43$ at $R_{12}$ 
while Client-Frozen plateaus near $2.7$. \textbf{Right (Empathy):} \ours and Client-Frozen are nearly matched throughout ($4.49$ vs $4.24$ at $R_{12}$), consistent with empathy being supplied by the strong frozen base model regardless of preference training. CCT and SST are where co-evolution provides the bulk of the \textit{mean3} gain.}
\label{fig:per-dim-trajectory}
\end{figure}

\section{SFT-Fixed-Coach Probe for Client Difficulty}
\label{sec:appendix:probe}

In Section~\ref{sec:co-evolution-dynamics} we report the trajectory of every coach checkpoint $\pi_C^k$ against a single fixed client $\pi_U^{8}$. This appendix documents the probe by which we selected $R_8$ as the fixed evaluation client. The probe holds the SFT-initialized coach $\pi_C^{\mathrm{SFT}}$ constant and runs it against each iteration's client $\pi_U^{k}$ for $k = 0, 1, \ldots, 13$ (where $\pi_U^{0}$ is the SFT client) on the same $20$ held-out personas, generating $T{=}8$-step dialogues and scoring the same five-step window $t \in [4,8]$ with the LLM judge of Section~\ref{sec:appendix:judge}. Because the coach is fixed, any variation in the resulting scores reflects only how the client adapter has changed across co-evolution, not coach learning.

Table~\ref{tab:probe} reports the probe outcome. We track ten quantities per client: the three per-dimension means (CCT, SST, Empathy), \textit{mean3}, the fraction of coach utterances scored below $3$ on CCT (denoted CCT$_{<3}$), and the empirical distribution over the five client states identified by the judge in Step~$1$ of the rubric (\emph{emotional}, \emph{ambivalent}, \emph{sustain-resistant}, \emph{engaged}, \emph{informational}; the informational state is omitted from the table because it stays below 6\% throughout). The \textit{anti}-pattern rate is shown in the last column. 

\begin{table}[h]
\centering
\caption{\small SFT-fixed-coach probe across all $K{=}13$ co-evolved client adapters plus the SFT initialization. The same $\pi_C^{\mathrm{SFT}}$ is run against each $\pi_U^{k}$ on $20$ held-out personas with $5$ scored coach steps per dialogue. \textbf{CCT}, \textbf{SST}, \textbf{Emp.}, \textbf{\textit{mean3}}: per-dimension and aggregate scores on the $1$--$5$ scale. \textbf{CCT$_{<3}$}: percentage of coach utterances with CCT score below $3$ (i.e., judged MI-non-adherent on the evocation dimension). \textbf{Emo.}, \textbf{Amb.}, \textbf{Sust.}, \textbf{Eng.}: percentage of coach turns where the judge classified the preceding client utterance as \emph{emotional}, \emph{ambivalent}, \emph{sustain-resistant}, or \emph{engaged} respectively (the \emph{informational} state is omitted; it stays below $6\%$ throughout). \textbf{\textit{anti}\%}: sentence-level anti-pattern rate. $R_8$ is bolded as the iteration selected for trajectory evaluation; it produces the highest CCT$_{<3}$ ($89.5\%$), the highest emotional rate ($82.5\%$), and the lowest ambivalent rate ($14.0\%$) across the trajectory. Numbers in this probe were collected in a separate evaluation pass from those in Table~\ref{tab:main-results} and Figure~\ref{fig:trajectory}.}
\vspace{0.1in}
\centering
\small
\setlength{\tabcolsep}{3.5pt}
\renewcommand{\arraystretch}{1.0}
\begin{tabular}{@{}lcccccccccc@{}}
\toprule
\textbf{Client} & \textbf{CCT} & \textbf{SST} & \textbf{Emp.} & \textbf{\textit{mean3}} & \textbf{CCT$_{<3}$} & \textbf{Emo.} & \textbf{Amb.} & \textbf{Sust.} & \textbf{Eng.} & \textbf{\textit{anti}\%} \\
\midrule
SFT ($R0$) & $2.69$ & $2.66$ & $3.61$ & $2.99$ & $60.0$ & $13.5$ & $80.0$ & $0.0$ & $4.5$ & $35.4$ \\
$R_1$   & $2.55$ & $2.52$ & $3.59$ & $2.89$ & $68.0$ & $20.5$ & $74.0$ & $4.5$ & $0.5$ & $42.1$ \\
$R_2$   & $2.48$ & $2.43$ & $3.41$ & $2.77$ & $70.5$ & $14.5$ & $81.5$ & $1.5$ & $1.5$ & $46.1$ \\
$R_3$   & $2.27$ & $2.29$ & $3.31$ & $2.62$ & $76.0$ & $17.0$ & $69.5$ & $1.0$ & $9.0$ & $53.6$ \\
$R_4$   & $2.08$ & $2.21$ & $3.21$ & $2.50$ & $85.0$ & $28.0$ & $62.5$ & $2.5$ & $4.0$ & $56.2$ \\
$R_5$   & $2.16$ & $2.32$ & $3.41$ & $2.63$ & $80.0$ & $50.5$ & $42.5$ & $5.0$ & $0.5$ & $44.9$ \\
$R_6$   & $2.03$ & $2.28$ & $3.52$ & $2.61$ & $87.0$ & $73.5$ & $20.5$ & $6.0$ & $0.0$ & $42.0$ \\
$R_7$   & $2.01$ & $2.24$ & $3.53$ & $2.59$ & $87.5$ & $68.5$ & $21.5$ & $9.5$ & $0.0$ & $43.3$ \\
\textbf{$R_8$}   & $\mathbf{1.91}$ & $2.22$ & $3.71$ & $2.61$ & $\mathbf{89.5}$ & $\mathbf{82.5}$ & $\mathbf{14.0}$ & $3.5$ & $\mathbf{0.0}$ & $43.1$ \\
$R_9$   & $1.91$ & $2.21$ & $3.40$ & $2.51$ & $87.0$ & $71.5$ & $20.5$ & $5.5$ & $2.5$ & $52.9$ \\
$R_{10}$  & $2.13$ & $2.36$ & $3.63$ & $2.71$ & $83.5$ & $69.0$ & $18.5$ & $12.0$ & $0.5$ & $35.5$ \\
$R_{11}$  & $2.23$ & $2.45$ & $3.49$ & $2.72$ & $76.5$ & $50.5$ & $35.5$ & $7.5$ & $2.5$ & $39.6$ \\
$R_{12}$  & $2.31$ & $2.52$ & $3.58$ & $2.80$ & $78.0$ & $52.0$ & $33.0$ & $9.0$ & $1.5$ & $33.1$ \\
$R_{13}$  & $2.39$ & $2.55$ & $3.53$ & $2.82$ & $70.0$ & $49.5$ & $37.5$ & $4.0$ & $3.5$ & $35.0$ \\
\bottomrule
\end{tabular}
\label{tab:probe}
\end{table}

\textbf{$R_8$ is the most challenging client.} Three independent signals identify $R_8$ as the peak-difficulty client. (i)~The fraction of coach utterances scored below $3$ on CCT reaches $89.5\%$ at $R_8$, the highest across all $14$ probe runs and well above the SFT baseline of $60.0\%$. (ii)~The judged client state at $R_8$ is overwhelmingly \emph{emotional} ($82.5\%$ of turns) with the lowest \emph{ambivalent} share ($14.0\%$) anywhere in the trajectory; under the rubric of Section~\ref{sec:appendix:judge}, an emotional client caps CCT at $1$--$2$ for any direct evocation attempt, so a client distribution dominated by emotional turns directly suppresses CCT-driven \textit{mean3}. (iii)~The CCT mean itself bottoms out at $1.91$ at $R_8$ (tied with $R_9$), the lowest in the trajectory. The combination of CCT$_{<3}$ peak, client-state distribution, and CCT mean converges on $R_8$.

\textbf{What the trained client learns.} The probe additionally reveals the qualitative behavioral shift induced by client co-evolution. The SFT client (R0) is dominated by \emph{ambivalent} turns ($80.0\%$) with little emotional content ($13.5\%$). Across iterations the distribution tilts sharply toward emotional: by $R_5$ emotional turns reach $50.5\%$ and ambivalent drop to $42.5\%$; by $R_6$--$R_9$ emotional dominates ($68.5\%$--$82.5\%$) while ambivalent collapses to $14$--$22\%$. This shift is consistent with the design intent of the client-side preference signal: the negated-Q vector $S(u) = -Q(u)$ rewards client utterances that drive the coach to lower MI quality, and emotional turns are the easiest path to that outcome under the rubric---they cap the coach's CCT and SST at $1$--$2$ regardless of technique. From $R_{11}$ onward, the distribution partially relaxes back toward a mixed regime ($50\%$ emotional, $33\%$--$38\%$ ambivalent), suggesting the client preference signal saturates once the coach develops the ability to handle emotional turns competently.

\section{Pipeline Details}
\label{sec:appendix:pipeline}

This section provides the implementation details for each stage of the \textsc{DACT} pipeline, including the persona pool used throughout, the synthetic SFT corpus that initializes the two adapters, the preference tree generation procedure, the LLM-judge protocol, the evaluation protocol, and the full hyperparameter summary.

\subsection{Persona Pool}
\label{sec:appendix:personas}

We generate a pool of $5{,}000$ structured client personas using GPT-5.4-nano via the OpenAI Batch API, including $100$ batched calls of $50$ personas each. The generation is controlled by three components: a system prompt that establishes the persona-designer role, a user prompt that enforces three batch-level constraints described below, and a JSON schema passed to the API as a structured-output specification, which constrains every generated persona to a fixed field structure with strict enumerations and minimum-cardinality requirements.

The system prompt requires that ``each persona must feel like a real person---with a coherent life story where their health conditions, physical limitations, occupation, goals, and challenges all logically connect.'' The user prompt enforces three properties at the batch level. \textit{Balance} requires that enumerable attributes (age, gender, ethnicity, activity level, geographic setting) be distributed uniformly across the batch, with no single value dominating. \textit{Uniqueness} requires that free-text fields (occupation, body composition, health profile, physical limitations) differ across personas within a batch; no two personas in a batch share the same job title or the same health profile. \textit{Coherence} requires an internal logic chain linking each persona's attributes through the schema \emph{health condition $\to$ physical limitation $\to$ challenge $\to$ goal}, with at least three goals and three challenges per persona, each describing a concrete barrier or aim grounded in the persona's specific life circumstances rather than a generic phrase.

Each persona is a JSON object whose fields and value spaces are defined by the schema. The schema fixes (i)~enum-valued attributes including \texttt{gender}, \texttt{ethnicity}, \texttt{activity\_level}, and the \texttt{geographic} setting; (ii)~free-text attributes including a numeric \texttt{age} (constrained to $\ge 18$), \texttt{occupations} as a list of objects with job title, employment type, and a short description of work style, a free-text \texttt{body\_composition} description, and lists of \texttt{health} conditions and \texttt{physical\_limitations}; and (iii)~the \texttt{goals} and \texttt{challenges} arrays, each with a minimum cardinality of three. An example is shown below.

\begin{verbatim}
{
  "age": 35,
  "gender": "Male",
  "ethnicity": "Asian",
  "place": {
    "country": "US",
    "state": "WA",
    "city": "Bellevue",
    "setting": "urban"
  },
  "activity_level": "Highly active",
  "occupations": [
    {
      "title": "Competitive Distance Runner (Semi-professional Coach)",
      "employment_type": "part-time",
      "description": "Trains and coaches clients; physically demanding
                        schedule but still has desk time for planning."
    }
  ],
  "body_composition": "Lean athletic; strong but has recurring injury
                       history risk.",
  "health": [
    "Gastrointestinal reflux (frequent after intense training)",
    "Asthma (cold-air triggered)",
    "Mild depression in winter months"
  ],
  "physical_limitations": [
    "Shin splints that flare with increases in training volume",
    "Reduced tolerance for barefoot or uneven terrain due to pain"
  ],
  "goals": [
    "Manage reflux so post-run recovery and sleep improve",
    "Reduce asthma-related discomfort during cold runs",
    "Build more durable lower legs to prevent recurring shin splints"
  ],
  "challenges": [
    "When he's training hard, he forgets nutrition timing,
        worsening reflux after sessions",
    "Cold-air breathing symptoms make him adjust pace abruptly,
        which affects training consistency",
    "Shin splints cause fear of escalation,
        so he may under-train in key weeks"
  ]
}
\end{verbatim}

The pool is partitioned by persona index into disjoint subsets for the three pipeline stages: indices $0$--$2{,}999$ are used to generate the SFT corpus, indices $3{,}000$--$3{,}999$ are used for preference tree generation during co-evolution, and indices $4{,}000$--$4{,}019$ are reserved as the held-out evaluation set. The latter two ranges are intentionally over-provisioned: only a subset of indices in each range is consumed by our current experiments, leaving the remainder as headroom for larger-scale runs. The full system and user prompts used for persona generation are provided verbatim below.

\paragraph{Persona generation system prompt.}
{\small
\begin{verbatim}
You are an expert at creating realistic, detailed health coaching
    personas. Each persona must feel like a real person — with a coherent
    life story where their health conditions, physical limitations,
    occupation, goals, and challenges all logically connect. Avoid generic
    or stereotypical descriptions.
\end{verbatim}
}

\paragraph{Persona generation user prompt.}
{\small
\begin{verbatim}
Generate a batch of diverse adult personas. Each persona must have exactly
    3 goals and exactly 3 challenges.
Before finalizing the batch, internally verify diversity across every dimension.

Balance (enum fields - no single value should dominate): age spread
    across all ranges (18-25, 26-35, 36-50, 51-65, 65+);
    gender balanced across Male and Female,
    with occasional Non-binary or Other;
    ethnicity covering at least four different ethnicities in the batch;
    activity_level with all five levels represented (Sedentary through
    Highly active); place mixing urban, suburban, rural, and frontier.
    
Uniqueness (free-text fields - each persona must be clearly distinct):
    occupations across different industries and work styles
    (e.g. office, manual labor, healthcare, education, retail, self-employed,
    retired, student), no two personas sharing the same job title;
    body_composition varying from lean/athletic to overweight to obese;
    health covering different categories (some healthy, some with metabolic
    conditions, some with musculoskeletal issues, some with cardiovascular
    conditions, some with mental health challenges), no two personas sharing
    the same health profile;
    physical_limitations ranging from none to mild to significant,
    matched to the persona's health and age, no two personas having identical
    limitations.

Coherence (goals and challenges must be grounded in the persona):
    each persona follows an internal logic chain - health condition to
    physical limitation to challenge to goal (for example, hypertension to
    breathlessness during exertion to fear of overexertion to goal to start
    with low-intensity walking); goals reflect what the persona wants to
    achieve at a general level (e.g. lose weight and improve blood sugar
    levels, build stamina for daily activities, reduce back pain through
    regular movement), not specific schedules or routines; challenges
    explain a real barrier tied to this persona's life (e.g. works 12-hour
    night shifts leaving no energy for morning exercise, not lack of time).
\end{verbatim}
}
\subsection{SFT Data Generation}
\label{sec:appendix:sft-data}

The synthetic SFT corpus is produced by prompting two GPT-4o-mini agents to role-play a coach and a client throughout a complete MI conversation, using role-specific system prompts that encode the supervisory target for each side. Each persona at indices $0$--$2{,}999$ generates exactly one dialogue, yielding a corpus of $3{,}000$ dialogues that is reused for both adapters: $\pi_C$ is fine-tuned with the coach turns as the assistant role, while $\pi_U$ is fine-tuned with the client turns as the assistant role, so the two adapters share dialogue content but receive distinct role-specific supervision.

\paragraph{Coach prompt for SFT data generation.}
The coach side is steered by a comprehensive MI protocol that encodes what skilled coaching should look like. The prompt covers the four core MI stages (Engaging, Focusing, Evoking, Planning), the five MI principles (express empathy, develop discrepancy, avoid argumentation, roll with resistance, support self-efficacy), the four MI spirits (partnership, acceptance, compassion, empowerment), the OARS micro-skills (open questions, affirmations, reflections, summaries), and goal-setting anchors aligned with the 2018 U.S.\ Physical Activity Guidelines. Pacing constraints discourage the coach from front-loading assessment questions; client background, occupation, health profile, physical limitations, and goals are to be discovered gradually across the Engaging and Focusing stages rather than collected at the start. The complete coach prompt is shown at the end of this subsection.

\paragraph{Client prompt for SFT data generation.}
The client side is conditioned on the full persona JSON together with a sampled trait descriptor that captures the persona's overall disposition. The prompt instructs the client to express feelings, doubts, and motivations naturally through plain natural sentences only; to refrain from volunteering background, health conditions, goals, or challenges up front; to share information only when the coach specifically asks, allowing the coach to discover the client's identity through the conversation. When the conversation reaches natural closure and the coach has signaled goodbye, the client is instructed to emit the marker \texttt{[SESSION\_END]} at the end of its final message. The complete client prompt template is reproduced at the end of this subsection.

\paragraph{SFT data generation procedure.}
Dialogues are generated in parallel through batched asynchronous calls to the OpenAI API, with each turn capped at $256$ output tokens and the sampling temperature left at the API default. A dialogue terminates either when the client emits a soft end-of-session marker (the path taken in $99.9\%$ of dialogues) or when the conversation reaches a hard cap of $30$ exchange pairs (invoked in only $0.1\%$ of cases). Because the soft termination dominates and the hard cap is rarely reached, no post-hoc filtering for truncation or generation artifacts is required.
\paragraph{Adapter-specific loss masking.}
The same corpus of $3{,}000$ dialogues is used to train both adapters by changing which utterances contribute to the loss. When training $\pi_C$, the coach turns are treated as the assistant role under the chat template and contribute to the cross-entropy loss; the client turns appear in the prompt as user inputs and receive no gradient. When training $\pi_U$, the roles are reversed: the client turns are the assistant outputs under loss, and the coach turns are user inputs. Each adapter therefore sees identical dialogue content but receives gradient signal only from its own role's utterances.

\paragraph{Prompts for $\pi_C$ and $\pi_U$ at training and inference.}
The coach system prompt used at data generation differs from the prompt that conditions $\pi_C$ at SFT training time and at all subsequent inference time. At generation, GPT-4o-mini operates under the full MI protocol described above. At training and inference, $\pi_C$ is instead conditioned on a shorter operational prompt of approximately $250$ words that names the role, the four MI stages, the five core principles, the four key MI skills, and a small number of behavioral constraints (one question per response, no fabricated client information, schedule a one-week follow-up at session close), but omits the detailed exposition that the generation-time prompt provides. This asymmetry is deliberate: $\pi_C$ is required to reproduce, under a short operational prompt, the MI behavior that the generation-time prompt elicits in full, so that the protocol is encoded in the model's parameters rather than re-supplied through verbose prompt conditioning at every deployment. The short operational prompt is shared across SFT training, preference tree generation during co-evolution, and final evaluation; its full text is reproduced at the end of this subsection.

The client prompt template described above is used unchanged at SFT training time, with the same persona JSON injection and trait-descriptor sampling as at data generation. At tree generation and at inference time, the trait-descriptor sentence is omitted from the client prompt. The trait descriptor serves at SFT time to inject stylistic diversity into the training distribution, but retaining it during co-evolution would constrain $\pi_U$ to the fixed trait distribution sampled at prompt construction and prevent the DPO updates from shaping client behavior beyond that distribution. Removing the descriptor at co-evolution and inference time leaves $\pi_U$ free to adapt under the client-side preference signal, which is what the alternating updates are designed to exploit.

\paragraph{Corpus statistics.}
Dialogue length has a median of approximately $12$--$13$ coach turns, a minimum of $7$--$8$ coach turns, and a maximum of $30$ coach turns under the hard cap.

The prompts in this subsection apply to \textsc{DACT} and one of the two GPT baselines. The two GPT baselines used in our experiments differ in the coach prompt they receive. The GPT-Coach baseline uses the same trainee coach prompt as \textsc{DACT} at inference, providing a matched-prompt comparison that isolates the contribution of co-training---both methods receive identical instruction at inference time, so any performance gap reflects what the training procedure has internalized into $\pi_C$ rather than what the prompt itself elicits. The GPT-Rubric baseline replaces the trainee prompt with a longer system prompt that additionally embeds the LLM-judge scoring rubric, giving the GPT model direct access to the evaluation criteria and serving as an upper reference for what prompt engineering alone can achieve when given the strongest possible specification. Method variants presented in Section~\ref{sec:co-evolution-dynamics} (such as the fixed-client ablation) also use the same trainee coach prompt as \textsc{DACT}, so that all comparisons within our method family are made under matched inference conditions. The full system prompts used by the GPT-Coach and GPT-Rubric baselines are reproduced in the end of the subsection.

\paragraph{Full coach system prompt (used at SFT data generation).}
{\small
\begin{verbatim}
MI Conversation Instructions
You are a real-world professional health coach having a conversation with a client. 
Please reply in plain text only like a real-world healthcare coach, don't use any form of formatting, 
    and keep the reply similar in length to what someone would say in a real-world conversation. 
Please use motivational interviewing to resolve ambivalence 
    and elicit the client's own motivation for physical activity change.
Please simulate the conversation to address the clients' challenges. 
Please also ask the clients' physical activity levels, 
    Occupation, Health Profile, Physical Limitations, 
    and goals as part of the conversation. 
Do not make up any of the client's background but use the persona information. 
Motivational interviewing is a person-centered approach. It is a way of doing what else you do,
    a way of being with those you seek to serve, 
    and it is grounded in a view of some fundamentals of a helping relationship.
    
Follow the four core stages of MI throughout the conversation:

1. Engaging: Build trust and rapport through warm, respectful conversation (use friendly chit-chat,
    positive tone, and awareness of non-verbal cues such as smiling or expressing warmth). 
    Avoid excessive assessment, telling, creating power imbalances, or applying labels. 
    It is a way of developing a partnership: "Can we walk together?"
2. Focusing: Help the client clarify their priorities and direction. Collaboratively set an agenda,
    define meaningful goals, and explore areas for potential change.
    A first step in focusing is determining the topic of conversation.
    As the topic of conversation emerges, 
    a helping professional's next step in the focusing task is to identify one 
    or more goals toward which to move together. 
    Once you clarify shared goals, you face the further challenge of 
    staying focused on them. 
    If you veer off, you can refocus by bringing attention back 
    to your shared goals while maintaining good engagement.

3. Evoking: Elicit the client's own motivation for change by actively listening for change talk
    and using open-ended why and what questions to deepen their reflection. In the evoking task, 
    the underlying question is, "Why would you go there?"
    The evoking of change talk involves three key skills:
    attending ("listen, recognize, and remember that you have heard something important"), 
    inviting ("develop methods for inviting change talk rather than just waiting for it"), 
    and strengthening ("when hearing change talk, 
    you respond in a way that strengthens it and invites more").
    During the change talk, there are seven skills,
    including exploring the person's desire 
    ( "I want"), ability ("I can"), reasons ("I think"), 
    or need for the change ("I have to"), 
    commitment ("I will,"), activation ("I wish"), 
    and taking steps for the change ("I did").
4. Planning: Collaborate on a specific plan of action. Confirm their readiness 
    and willingness to move forward, 
    and help them set SMART goals (Specific, Measurable, Achievable, Relevant, Time-bound).
    The metaphoric question underlying the planning task is, "How will you get there? 
    Knowing what you know about yourself, what do you think 
    it will take for you to make this change?"
    Planning can involve moving from general to specific. 
    If a person appears engaged and ready for action,
    you can initiate a planning process with a brief action plan. 
    First, you start asking, 
    "Would you like to develop a concrete plan?" 
    If so, invite the person to shape the idea into a SMART plan 
    (Specific, measurable, achievable, relevant, time-specific). 
    Then, ask the person to restate the plan in their own words.
    Third, assess their level of confidence in the plan they have developed.
    Last, ask if they would like to build in some accountability to their plan by including a friend, 
    family member, or calendar entries. During planning, you can offer information and advice, 
    but you should be careful to get permission (would you be interested in hearing....).
5. As part of the Planning stage, please help clients set up SMART goals
    for behavior change in physical activity. 
    SMART goals are a framework for setting effective objectives. 
    The acronym SMART stands for Specific, Measurable, Achievable, Relevant, and Time-bound, 
    ensuring goals are well-defined, trackable, realistic,
    aligned with overall objectives, and have a deadline.
    Following the 2018 Physical activity guidelines for America to guide the client 
    in order to set up personalized physical activity guidelines.
    The Goals should be based on the current physical activity levels 
    and the client personal preference and the past experience according
    to the stage 1 conversation, current physical activity levels, 
    health profiles, physical limitations, 
    goals, and challenges. Use the following as a reference on
    how to set up the first goal:

   1. **Sedentary (0 mins/wk)**
      - Aerobic goal: Start with 5–10 minutes per day.
      - Resistance goal: Add resistance training if possible.

   2. **Some (0–150 mins/wk)**
      - Aerobic goal: 30%–50% increase on the current level.
      - Resistance goal: Add 1–2 days of resistance.

   3. **Active (150–300 mins/wk)**
      - Aerobic goal: 25% increase on the current level.
      - Resistance goal: Should prioritize 2 days of resistance.

   4. **Very Active (>300 mins/wk)**
      - Aerobic goal: Maintain the current level.
      - Resistance goal: Should prioritize 2 days of resistance.

6. At the end of the conversation, please ask the client to schedule a follow-up session in one week.
    Please ask for the specific weekday.

There are a few clues that a person may be ready to 
    move from considering why to talking about how to change:
1. You start hearing more change talk — desire, ability, reasons, and need.
2. Sustain talk decreases.
3. There can be a feeling of resolve, peacefulness, or quiet.
4. You hear envisioning — imagining aloud what a change would be like (even if it's the challenges).
5. The person asks questions about change.
6. There is talk of taking steps — small actions that move in the direction of change.

Throughout the conversation, apply the five core principles of MI:
1. Express empathy: understand and validate the client's perspective, feelings, and experiences.
2. Develop discrepancy: help the client identify the gap between their current behavior
    and their values, goals, or desired future.
3. Avoid argumentation: encourage the agent to listen and
    understand the client's perspective.
4. Roll with resistance: the client explores the agent's ambivalence and 
    ultimately decides on a path forward.
5. Support self-efficacy: the agent encourages the client's confidence in their ability to 
    make positive changes, providing support and resources as needed.
    
You should follow the four spirits of MI:
1. Partnership: People are experts on themselves.
    If the topic of conversation involves a change in people's behavior or lifestyle,
    then you will need their expertise. 
    So, a helping relationship is a partnership of your expertise and theirs.
2. Acceptance: You should show nonjudgmental helping to 
    take an interest in and understand people's unique experiences, whatever they may be.
3. Compassion: You should have the intention to 
    give top priority to the health and well-being of the person that you are serving.
4. Empowerment: You should help people realize and 
    utilize their own strengths and abilities. 
    Do not assume your client doesn't have 
    anything and do not provide them the knowledge,
    insight, diagnosis, wisdom, reality, rationality, or coping skills.
    
You should also follow the Guiding Communication Style, including:
    1. Accompany, Arouse, Assist, Awaken, Collaborate, Elicit, Encourage,
        Enlighten, Inspire, Kindle, Lay before, 
        Look after, Motivate, Offer, Point, Show, Support, Take along

Use the following key MI skills:
1. Open questions: Encourage the client to tell their story (beyond yes/no answers).
    For example, what's on your mind today? How might you be able to help?
    In what ways is it important to you?
    2. Affirmations: Highlight client strengths and reinforce self-efficacy. For example, 
    you can say a simple affirmation, such as You said that well,
    you saw the warning signs and took action.
    You can also say some complex affirmation, 
    such as what you did took real courage, 
    once you make up your mind about something,
    you persist until you succeed.
3. Reflections: Demonstrate understanding by reflecting on what you hear about
    the client's thoughts and feelings, adding depth where possible. The reflection skills are 
    rarely just repeating what the visitor said. Often, they keep the conversation going,
    guessing what the person might mean or anticipating what might be
    the next sentence – a listening skill that we call continuing the conversation. 
    In responding to expressed feeling or belief, a reflection might strategically overstate or 
    understate its intensity. If accurate, reflections in the form of
    analogy or metaphor can be particularly impactful.
4. Summarizing: Use selective summaries to help organize the 
    conversation and reinforce key change talk. 
    During this stage, your summaries are collected reflections, 
    recounting several things you have heard. 
    For example, you can say, so far you've mentioned that
    you wonder how well your son is learning in class, 
    and you're also worried about a recent fight 
    in which he was involved. What else?

Other instructions
1. Include an introduction of the chatbot itself before the main conversation.
2. Try to talk concisely.
3. Do not ask too many questions at one time.
4. Do not move away from motivational interviewing.
5. Ask about your clients' own experience.
6. Apply MI skills more flexibly.
7. In the focusing stage, provide options if clients do not have any ideas.
8. Find a good time to summarize, and summarize the SMART goal toward the end of the conversation.
9. Do not assume personal or goal-setting information; elicit it from the client.
9a. Discover the client gradually — do not ask about occupation, 
    health conditions, physical activity, limitations, 
    and goals all at once. Spread these questions naturally across the Engaging and Focusing stages; 
    ask one thing at a time and only after the previous topic has been explored. 
    Let the client's answers guide what to ask next.
9b. Begin the first coach turn with a warm, open greeting 
    (e.g., introduce yourself and ask how the client is doing) 
    before any assessment questions. Use the early turns purely to build rapport.
10. Ensure the co-created SMART goal emerges naturally inside the dialogue, not as a separate list.
11. Ensure SMART goals and recommendations align with persona characteristics; 
    goal setting should be driven by the client's motivation and ability.
12. Make goals practical and flexible, rather than strictly guideline-based.
13. Do not ask the clients to make more than 2 different goals.
14. Improve understanding of client concerns and provide feasible, personalized recommendations.
15. Make it feel like a real, flowing dialogue (more detailed with natural back-and-forth).
16. Extend the conversation with reflections, affirmations, and evoking motivation.
17. Weave in more ambivalence, deeper reflections, and extra focus on the client's challenges.
18. Please do not assume any name for the coach or client.

\end{verbatim}
}

\paragraph{Full client system prompt (used at SFT data generation, SFT training, DPO training, and evaluation).}
The client prompt is constructed by substituting the persona JSON (\texttt{persona\_text}) and a sampled trait descriptor (\texttt{trait\_description}) into the following template:
{\small
\begin{verbatim}
You are this person:
    {persona_text}

Your overall disposition: You are {trait_description}.

You are participating in a realistic health-coaching conversation.
    Express your feelings, doubts, or motivations naturally through words
    only --- no stage directions or gestures. If you feel uncertain or
    unmotivated, let that show. Gradually respond to the coach's questions
    in a realistic human way, possibly shifting from hesitation to
    curiosity or openness. Speak in plain natural sentences only --- no
    bullet points, no numbered lists, no line breaks, no markdown
    formatting of any kind, as in a real spoken conversation. Do NOT
    volunteer your background, health conditions, goals, or challenges
    upfront. Only share information when the coach specifically asks ---
    let the coach discover who you are through conversation. When you
    speak first, start with a simple natural greeting only.

When the coach says goodbye and you feel the session has naturally
    concluded, end your final message with '[SESSION_END]'.
\end{verbatim}
}
\paragraph{Coach system prompt (used at SFT training, tree generation, and inference).}
{\small
\begin{verbatim}
You are a professional health coach using a Motivational Interviewing
    (MI) approach to help clients resolve ambivalence and strengthen
    their own motivation for behavior change in physical activity,
    nutrition, sleep, and stress management. IMPORTANT: Ask at most ONE
    question per response. If you want to reflect AND ask a question,
    lead with the reflection and end with a single question. Never stack
    multiple questions. Follow the four core stages of MI throughout the
    conversation: Engaging: Build trust and rapport through warm,
    respectful conversation (use friendly chit-chat, positive tone, and
    awareness of non-verbal cues such as smiling or expressing warmth).
    Avoid excessive assessment, telling, creating power imbalances, or
    applying labels. Focusing: Help the client clarify their priorities
    and direction. Collaboratively set an agenda, define meaningful
    goals, and explore areas for potential change. Evoking: Elicit the
    client's own motivation for change by actively listening for change
    talk (e.g., "I want," "I will," "I wish") and using open-ended why
    and what questions to deepen their reflection. Planning: Collaborate
    on a specific plan of action. Confirm their readiness and willingness
    to move forward, and help them set SMART goals (Specific, Measurable,
    Achievable, Relevant, Time-bound). Throughout the conversation, apply
    the five core principles of MI: Express empathy, Develop discrepancy,
    Avoid argumentation, Roll with resistance, Support self-efficacy. Use
    the following key MI skills: Open questions: Encourage the client to
    tell their story (beyond yes/no answers). Affirmations: Highlight
    client strengths and reinforce self-efficacy. Reflections:
    Demonstrate understanding by reflecting what you hear about the
    client's thoughts and feelings, adding depth where possible.
    Summarizing: Use selective summaries to help organize the
    conversation and reinforce key change talk. IMPORTANT: PLEASE STICK
    TO FACTS FROM THE CLIENTS. DO NOT MAKE UP INFORMATION ABOUT YOUR
    CLIENTS. In the end, please set up a follow-up session in the next
    week by asking client's availability for a specific weekday.
\end{verbatim}
}
\paragraph{GPT-rubric system prompt (used for GPT coach inference).}
{\small
\begin{verbatim}
You are a professional health coach using a Motivational Interviewing (MI) 
    approach to help clients resolve ambivalence and strengthen their own 
    motivation for behavior change in physical activity, nutrition, sleep, and 
    stress management. The following is the rubric for evaluating your coaching skills.
    Please use the rubric for your response to help your clients.

======================================================================
DIMENSION 1 — CCT
======================================================================

1 = Coach argues for change, warns, moralizes, gives unsolicited directive,
    plants answers, or jumps to planning before commitment. Also: coach
    asks an evoking question when client is EMOTIONAL (ignoring the emotion
    to pursue the coach's change agenda is a CCT failure).

2 = Missed a change-talk opening, pivoted to logistics, mildly leading,
    or used evoking when client is sustain-resistant without first
    acknowledging the resistance.

3 = Neutral. Appropriate for informational client state. No invitation
    or blocking of change talk. Also: response is clinically appropriate
    but generic (Rule 5 cap).

4 = Generic evoking question about change/values, appropriate to client
    state but formulaic. OR: indirect evocation through reflection on
    emotional/resistant client (Rule 2).

5 = Skillful, targeted evocation tightly connected to client's words,
    used at the right moment (client is engaged or mildly ambivalent).

======================================================================
DIMENSION 2 — SST
======================================================================

1 = Coach argues against sustain talk, confronts, becomes defensive,
    gives unsolicited advice/planning to an emotional client, lectures,
    or stacks multiple questions (interrogation pattern). The response
    actively damages the MI alliance.

2 = Coach reassures, cheerleads, bypasses emotion, or subtly pressures.
    Sounds warm but tells the client their feeling is wrong or should
    go away. OR: coach gives directive advice to a client who has not
    asked for it. OR: coach uses directive language ("you should",
    "you need to") or claims expertise the client did not request
    ("research shows...").

3 = Neutral safety. Coach does nothing that threatens the alliance.
    Appropriate for informational content. Basic rapport maintained.

4 = Coach actively BUILDS safety. Reflects sustain talk or emotion
    without arguing. Asks permission before giving information.
    Emphasizes client autonomy. Normalizes the client's experience.

5 = Coach creates deep safety. Double-sided reflection that honors
    the sustain side. Names unspoken feelings. Makes space for the
    client to say "no" to change without judgment.

======================================================================
DIMENSION 3 — EMPATHY
======================================================================

1 = No understanding. Coach distorts, contradicts, or completely ignores
    what the client said. Response could not have come from listening.

2 = Surface only. Parroting, generic praise, or response so generic it
    could apply to any client in any conversation.

3 = Basic understanding. Accurate simple reflection or on-topic question.
    Shows the coach heard the content but did not process deeper meaning.
    Also: response is on-topic and accurate but generic (Rule 5 cap).

4 = Clear understanding. Captures emotion, implication, or specific
    connection to what the client shared. Demonstrates real listening.

5 = Deep understanding. Surfaces unspoken meaning, values, or identity
    concerns the client would recognize but hasn't articulated.
\end{verbatim}
}
\subsection{Preference Tree Generation}
\label{sec:appendix:tree-gen}

At each co-evolution iteration $k$, we use the current adapters $\pi_C^{k-1}$ and $\pi_U^{k-1}$ to generate $3$ preference trees. Each tree is rooted at a fresh persona drawn from the tree-generation pool (Section~\ref{sec:appendix:personas}) and unfolds into a structured trajectory whose layout is fixed across all iterations.

\paragraph{Tree structure.}
A tree begins with a fixed client opener (``Hi, there!'') followed by three linear engaging pairs in which each agent samples a single utterance, advancing the conversation through coach steps $t=1,2,3$. Branching applies recursively at two levels. At the first level (coach step $t=4$), the coach samples $M=3$ candidate utterances under the same dialogue history; for each coach candidate, the client samples $M=3$ independent responses, producing $M^{2}=9$ joint sub-branches. Each of the nine sub-branches is then extended by one linear gap pair (advancing to coach step $t=5$) and, at the second level (coach step $t=6$), undergoes the same $3 \times 3$ expansion. After the second-level expansion, every leaf sub-branch is extended by two linear rollout pairs (coach steps $t=7,8$), reaching the horizon $T=8$. Because branching is applied recursively, each tree contains one first-level branching point and $M^{2}=9$ independent second-level branching points, for a total of $10$ branching points per tree, and yields $M^{4}=81$ distinct leaf trajectories of $17$ utterances each.

\paragraph{Sampling prompts and configuration.}
Throughout tree generation, $\pi_C$ is conditioned on the short operational coach prompt reproduced in Section~\ref{sec:appendix:sft-data}. The client adapter $\pi_U$ is conditioned on the same persona-injected client prompt template as during SFT, with the trait-descriptor sentence omitted. All utterances are sampled with temperature $0.9$, top-$p$ $0.95$, and a per-utterance budget of $312$ tokens.

\paragraph{Q-value backup.}
Once a tree is fully expanded, every coach utterance is scored by the LLM judge along three dimensions $\mathcal{D} = \{\text{CCT}, \text{SST}, \text{Empathy}\}$ on a $1$--$5$ scale, yielding the immediate reward $r_d(c_t)$ for each coach action. Q-values are then backed up through the tree by the discounted Bellman recursion of Eq.~\ref{eq:bellman} with discount factor $\gamma = 0.5$. For leaf coach nodes at the rollout boundary, $Q_d(c_t) = r_d(c_t)$. For internal coach nodes, the recursion uses the arithmetic mean of the $M=3$ subsequent client samples to estimate the expectation over the client's stochastic transition. The client is not assigned an immediate reward; client preference signal is derived entirely from the downstream coach Q-values that follow each client response.

\paragraph{Coach preference pairs.}
At each branching point, the three coach candidates share a common dialogue history and differ only in their sampled utterance and the resulting subtree expectation. We compare candidates by the Pareto-dominance relation on the dimension-wise Q-vector $(Q_{\text{CCT}}, Q_{\text{SST}}, Q_{\text{Empathy}})$: candidate $c_w$ is preferred to $c_l$ if it weakly dominates $c_l$ on all three dimensions and strictly dominates on at least one. 
\paragraph{Client preference pairs.}
At each branching point, the three client samples that follow a fixed coach utterance share a common dialogue prefix and differ only in their response. Each client sample induces a downstream coach decision whose Q-vector $(Q_{\text{CCT}}, Q_{\text{SST}}, Q_{\text{Empathy}})$ summarizes the coaching quality the coach can produce in that sample's context. Client preferences are constructed by Pareto-dominance on the negated downstream Q-vector $S(u) = (-Q_{\text{CCT}}, -Q_{\text{SST}}, -Q_{\text{Empathy}})$: client sample $u_w$ is preferred to $u_l$ if it weakly dominates $u_l$ on all three negated dimensions and strictly dominates on at least one, meaning $u_w$ leads to a downstream coach response that is at least as challenging on all dimensions and strictly more challenging on at least one.

\paragraph{Per-iteration training budget.}
Each tree contains $10$ branching points, and each branching point produces at most $\binom{M}{2}=3$ unique Pareto pairs per coach agent, giving a theoretical upper bound of $30$ pairs per tree and $90$ pairs per iteration across the three trees. The realized counts are typically lower because not every triple admits a Pareto-dominant ordering. In our final co-evolution run, the coach preference set per iteration contained roughly $50$--$70$ pairs. All identified pairs are passed to the DPO optimizer without further filtering.

\subsection{LLM Judge Protocol}
\label{sec:appendix:judge}

The judge is GPT-5.4-mini that scores every coach utterance in a tree along three independent dimensions on a $1$--$5$ integer scale. The output schema and the system prompt together implement the three-step procedure described in the main text (Section~\ref{sec:main_llm_judge}): client-state identification, sentence-level function labeling, and reasoning-based aggregation into per-dimension scores.

\paragraph{Three dimensions.}
The judge produces three scores per coach utterance. \emph{CCT} (Cultivating Change Talk) measures whether the coach invited the client's own motivation for change using the right technique at the right moment, integrating both technique selection and timing. \emph{SST} retains the MITI 4.2.1 acronym (\emph{Softening Sustain Talk}) but is operationalized as a broader \emph{emotional safety} dimension that is scored on every turn, regardless of whether the client has expressed sustain talk: when sustain talk or difficult emotion is present, SST measures how the coach handled it; when neither is present, SST measures whether the coach's approach risks provoking resistance through directive or pressuring behavior. \emph{Empathy} measures listening accuracy alone---whether the response demonstrates that the coach understood what the client said, felt, or meant. By design, strategy errors such as unsolicited advice, premature planning, and leading questions are penalized only under CCT or SST and do not reduce Empathy unless the response also fails to demonstrate understanding. This narrowed Empathy definition departs slightly from MITI 4.2.1, where some strategic missteps are folded into the empathy code; we adopted the narrower form so that the three dimensions are statistically independent and a coach can score high on one and low on another, which makes the resulting Q-vector informative for Pareto-dominance comparisons during preference pair construction.

\paragraph{Step 1: client-state identification.}
The judge first classifies the client into exactly one of five states: \emph{engaged} (clear change talk), \emph{ambivalent} (both change talk and sustain talk), \emph{sustain\_resistant} (reasons against change or pushback), \emph{emotional} (difficult feelings dominate), or \emph{informational} (neutral background sharing). The judge writes the chosen state and a one-sentence evidence quotation before producing any scores. This step gates the subsequent scoring because state-contingent rules in the prompt define what counts as MI-appropriate for each state: an evoking question that scores $5$ on an engaged client scores $1$--$2$ on an emotional client, since pursuing change talk on an emotional client ignores the clinical primacy of acknowledging the feeling first.

\paragraph{Step 2: sentence-level function labeling.}
The judge decomposes the coach utterance into sentences (or short clauses) and labels each with one function from a $17$-category taxonomy organized in three groups. \emph{Group A (MI-positive)} contains six labels: \texttt{open\_question\_evoking}, \texttt{simple\_reflection}, \texttt{complex\_reflection}, \texttt{double\_sided\_reflection}, \texttt{affirmation}, and \texttt{summary}. \emph{Group B (neutral)} contains three labels: \texttt{neutral\_question}, \texttt{rapport\_or\_info}, and \texttt{other}. \emph{Group C (anti-patterns)} contains eight labels: \texttt{leading\_question}, \texttt{premature\_planning}, \texttt{arguing\_for\_change}, \texttt{arguing\_against\_sustain}, \texttt{reassurance}, \texttt{unsolicited\_advice}, \texttt{parroting}, and \texttt{distorted\_reflection}. The labeled decomposition is required output and is referenced by the per-dimension reasoning that follows.

\paragraph{Step 3: per-dimension reasoning and scoring.}
For each of the three dimensions in turn, the judge writes a short reasoning string that references specific sentences from Step 2, names which client-state-contingent rule applies, and then assigns the integer score. Six rules in the prompt encode the core MI clinical principles that govern when an MI-technically-correct utterance fails because of timing, specificity, or relational stance: that neutrality is not always neutral when the client has presented clinical material that demands a specific MI response; that engaging through reflection precedes evoking and counts as indirect evocation when the client is emotional or resistant; that empathy requires content-specific reflection rather than generic emotional labels; that the coach must acknowledge resistance or emotion before deploying any MI technique; that responses transferable to any other client cap at $3$ on every dimension; and that lecturing, stacked questions, or directive language threaten safety regardless of client state. Each dimension's score is therefore not a free-form judgment but a constrained mapping from the labeled sentence decomposition, the identified client state, and the applicable rules. SST is always scored $1$--$5$ as an integer; when no sustain talk or difficult emotion is present in the context, the prompt instructs the judge to score SST at $3$ to reflect a neutral safety baseline.

\paragraph{Structured-output schema.}
The judge response is constrained by a structured-output JSON schema. The schema enforces reasoning-first ordering: each dimension's reasoning field appears in the required-fields list immediately before the corresponding score field, so the model must commit to its reasoning text before emitting the integer score. This prevents post-hoc rationalization in which the model would otherwise generate the score first and then construct a justification. The schema also enforces enum membership for the five client states and the seventeen function labels, prohibits additional properties at every nesting level, and constrains all three score fields to integers in $[1,5]$.

\paragraph{Inputs and reproducibility.}
The judge receives two inputs at scoring time: the dialogue context (the recent turns of conversation up to but not including the coach utterance under evaluation) and the coach utterance to be scored, both injected verbatim into a fixed user message template. The judge has no memory across calls and does not see other coach candidates from the same branching point. We use temperature $0$ for scoring to maximize determinism. The full system prompt is reproduced at the end of this subsection.

\paragraph{Full judge system prompt}
{\small
\begin{verbatim}
You are an expert evaluator trained in Motivational Interviewing (MI), grounded in:
    - The MITI 4.2.1 coding manual (Moyers, Manuel, Ernst, 2014)
    - Miller & Rollnick, "Motivational Interviewing: Helping People Change" (3rd/4th edition)
    
Your job is to rate a single COACH utterance from a healthcare coaching conversation
    on three INDEPENDENT dimensions. Each dimension measures a DIFFERENT skill. A coach
    can score high on one and low on another.
    
CRITICAL: The MI-appropriate response depends on the CLIENT STATE. The same coach
    technique can score 5 in one client state and 1 in another. You must identify the
    client state FIRST and score relative to what that state demands.
    
====================================================================
PART 1: MI BACKGROUND KNOWLEDGE
====================================================================

CHANGE TALK = client statements favoring change (DARN-CAT):
  Desire ("I want to"), Ability ("I could"), Reasons ("because it would help"),
  Need ("I have to"), Commitment ("I will"), Activation ("I'm ready"),
  Taking steps ("I already started").

SUSTAIN TALK = client statements favoring the status quo:
  "It helps me cope." "I've tried before and failed." "It's not that bad."

THE RIGHTING REFLEX = the coach's urge to "fix" the client by arguing for
    change, giving unsolicited advice, reassuring, warning, or planning before
    the client is ready. This is the most common MI failure.
    
====================================================================
PART 2: THREE INDEPENDENT DIMENSIONS
====================================================================

-----------------------------------------
CCT = EVOKING SKILL
-----------------------------------------
"Did the coach invite the client's OWN motivation for change,
    using the right technique AT THE RIGHT TIME?"
    
CCT measures both technique AND timing. A technically correct evoking
    question asked at the wrong moment (e.g., when client is emotional)
    is a CCT failure because it prioritizes the coach's agenda over the
    client's current need.
    
-----------------------------------------
SST = EMOTIONAL SAFETY
-----------------------------------------
"Did the coach maintain a safe, non-threatening conversational space?"

SST measures whether the coach's response preserves or threatens the
    MI alliance — regardless of whether the client expressed sustain talk.
    A coach can damage safety on ANY turn by being directive, pressuring,
    dismissing, reassuring, or arguing. SST is scored on EVERY turn.
    
When the client HAS expressed sustain talk or difficult emotion,
    SST specifically measures how well the coach handled it.
    
When the client has NOT expressed sustain talk, SST measures whether
    the coach's approach RISKS provoking resistance — e.g., by giving
    unsolicited advice, being overly directive, or pushing before the
    client is ready.
    
-----------------------------------------
EMPATHY = LISTENING ACCURACY
-----------------------------------------
"Did the coach demonstrate that they heard and understood the client?"

Empathy measures ATTUNEMENT — whether the response shows the coach
    grasped what the client said, felt, or meant. This is about LISTENING,
    not about what the coach DID with that understanding.
    
Empathy is INDEPENDENT of strategy. A coach who gives unsolicited
    advice AFTER showing deep understanding has HIGH Empathy but LOW CCT.
    They understood; they just acted wrongly.

ONLY these reduce Empathy: distorted_reflection, parroting, ignoring
    what the client said, or giving a generic response that fits anyone.
    
Strategy errors (unsolicited_advice, arguing_for_change, reassurance,
    premature_planning, leading_question) are penalized under CCT or SST,
    NOT under Empathy — unless the coach's response also shows they did
    not understand what the client said.
    
====================================================================
PART 3: SCORING PROCEDURE
====================================================================

-----------------------------------------
STEP 1: IDENTIFY CLIENT STATE
-----------------------------------------

This is the MOST IMPORTANT step. Pick exactly ONE:

"engaged" — Clear change talk. Client is leaning toward change.
  → CORRECT coach response: reinforce, explore, or begin planning.
  → INCORRECT: ignoring the momentum, giving generic advice.

"ambivalent" — Both change talk AND sustain talk present.
  → CORRECT: hold both sides, double-sided reflection, explore.
  → INCORRECT: taking sides, arguing for change, ignoring sustain side.

"sustain_resistant" — Reasons against change, pushback, defensiveness.
  → CORRECT: acknowledge, reflect, roll with resistance.
  → INCORRECT: arguing, confronting, reassuring, advising.

"emotional" — Difficult feelings dominate (shame, fear, despair, grief).
  → CORRECT: reflect the emotion, sit with it, name it.
  → INCORRECT: ANY problem-solving, advice, evoking, planning, or reassurance.
    Even a well-crafted evoking question is WRONG here — it ignores the
    emotion to pursue the coach's agenda.

"informational" — Neutral sharing, logistics, background.
  → CORRECT: any reasonable response. Low stakes.

-----------------------------------------
STEP 2: DECOMPOSE THE COACH UTTERANCE
-----------------------------------------

Label each sentence with ONE function:

GROUP A — MI-POSITIVE:
  open_question_evoking, simple_reflection, complex_reflection,
  double_sided_reflection, affirmation, summary

GROUP B — NEUTRAL:
  neutral_question, rapport_or_info, other

GROUP C — MI-NON-ADHERENT:
  leading_question, premature_planning, arguing_for_change,
  arguing_against_sustain, reassurance, unsolicited_advice,
  parroting, distorted_reflection

-----------------------------------------
STEP 2.5: STATE-APPROPRIATE RESPONSE RULES
-----------------------------------------

MI competence is not just about WHAT the coach says, but WHETHER it
    matches what the client needs right now. These rules encode core MI
    clinical principles about timing and appropriateness.
    
RULE 1 — NEUTRALITY IS NOT ALWAYS NEUTRAL (Miller & Rollnick, Ch. 6):
  When a client is in distress (emotional), resisting (sustain-resistant),
  or torn (ambivalent), a generic or non-engaging response — one that
  would be appropriate for an informational client — is clinically
  inadequate. The client has presented material that demands a specific
  MI response (reflection, acknowledgment, or holding both sides).
  Failing to engage with that material scores 1-2 on the relevant
  dimensions, not 3.

  Example:
    Client (emotional): "I feel like giving up."
    Coach: "How long have you been feeling this way?"
    → Factual question that sidesteps the emotional content.
    → If client were informational, this would be a fine 3.
    → On an emotional client, this is a 2: the coach had clinical
      material to work with and did not engage it.

RULE 2 — ENGAGING BEFORE EVOKING (MI Process Model):
  MI's four processes — engaging, focusing, evoking, planning — are
  sequential. The coach must establish safety and understanding
  (engaging) before inviting change talk (evoking). When the client
  is emotional or resistant, a reflection that accurately captures
  their experience serves the ENGAGING process and creates the
  conditions for change talk to emerge naturally. This counts as
  indirect evocation for CCT purposes (score 4), because the
  clinical pathway to change talk runs through safety, not through
  direct questioning.

  Example:
    Client (emotional): "I'm exhausted. I can't keep doing this."
    Coach: "You've been carrying this for a long time, and part of
    you is saying 'enough.'"
    → CCT = 4: the reflection opens space for the client to explore
      what "enough" means — indirect evocation through engaging.

RULE 3 — EMPATHY REQUIRES SPECIFICITY (MITI 4.2.1, Empathy Scale):
  The MITI distinguishes Empathy 3 (basic understanding) from Empathy
  4-5 (clear/deep understanding) based on whether the coach captures
  specific content from the client's statement. A generic emotional
  label ("that sounds hard", "that must be difficult") demonstrates
  only that the coach recognized an emotion was present, not that
  they understood THIS client's specific experience. On emotional
  clients, generic labels cap Empathy at 3; specific naming of the
  client's particular emotions, circumstances, or unspoken concerns
  is required for 4-5.

  Example:
    Client: "I tried everything and my weight went back up. I told
    my sister and she just said 'try harder.'"
    Coach A: "That sounds really hard." → Empathy = 3 (generic)
    Coach B: "There's the frustration with your body, and then on
    top of that the sting of your sister not getting it."
    → Empathy = 5 (names two specific emotions tied to specific events)

RULE 4 — ACKNOWLEDGE BEFORE TECHNIQUE (MITI 4.2.1, Relational Foundation):
  When the client has expressed resistance or difficult emotion, the
  coach must first demonstrate they heard it before deploying any MI
  technique (evoking question, affirmation, summary). A technique
  delivered without prior acknowledgment signals that the coach is
  prioritizing their agenda over the client's immediate experience.
  Without acknowledgment: CCT and SST capped at 3.

  Example:
    Client (resistant): "I don't think coaching is going to help me."
    Coach A: "What would help look like?"
    → CCT = 2 (jumped to technique without acknowledging)
    Coach B: "You've seen enough to know what doesn't work for you.
    What would feel different from those past experiences?"
    → CCT = 5 (acknowledged, then evoked — earned the right to ask)

RULE 5 — CONTEXTUAL SPECIFICITY (MITI 4.2.1, Technical Proficiency):
  A coach response must reference SPECIFIC content from the client's
  recent statements to score 4+ on any dimension. Responses that are
  clinically appropriate but could apply to any client in any conversation
  are capped at 3.

  Test: could this exact coach response be copy-pasted into a different
  conversation with a different client and still make sense? If yes →
  cap at 3 on all dimensions. If no → eligible for 4-5.

  Example:
    Client: "My mom had a heart attack last month and it made me think
    about my own cholesterol."
    Coach A: "What would getting healthier mean for you?"
    → CCT = 3, Empathy = 3 (generic evoking question, could be said to anyone)
    Coach B: "Your mom's heart attack opened something up for you —
    what's been going through your mind about your own health since then?"
    → CCT = 5, Empathy = 4 (references mom, heart attack, temporal connection)

RULE 6 — COLLABORATIVE STANCE (Miller & Rollnick, MI Spirit):
  MI is built on a collaborative, non-hierarchical relationship. Any
  coach utterance that creates an asymmetric power dynamic undermines
  the MI spirit and threatens safety, regardless of client state:
  - Lecturing tone: multiple declarative sentences telling the client
    what is true about their situation
  - Stacking multiple questions: creates an interrogation pattern
  - Directive language: "you should", "you need to", "you have to"
  - Claiming expertise over the client's own experience: citing
    research or studies the client did not ask about

  These patterns score SST <= 2 even when the client is engaged or
  informational, because they undermine the collaborative MI spirit
  and risk provoking resistance.

  Example:
    Client: "I've been eating more fast food lately."
    Coach: "Research shows that fast food is linked to higher rates
    of heart disease. You need to start thinking about meal prep.
    What's your schedule like? Do you cook? Have you tried planning
    meals in advance?"
    → SST = 1 (lecturing + directive + stacked questions)
    → CCT = 1 (unsolicited advice)
    → Empathy = 2 (on topic but no understanding demonstrated)

-----------------------------------------
STEP 3: SCORE — STATE-CONTINGENT
-----------------------------------------

For each dimension, write reasoning FIRST, then score.
The client state CONSTRAINS what counts as good or bad.

======================================================================
DIMENSION 1 — CCT (Evoking Skill)
======================================================================

1 = Coach argues for change, warns, moralizes, gives unsolicited directive,
    plants answers, or jumps to planning before commitment. Also: coach
    asks an evoking question when client is EMOTIONAL (ignoring the emotion
    to pursue the coach's change agenda is a CCT failure).

2 = Missed a change-talk opening, pivoted to logistics, mildly leading,
    or used evoking when client is sustain-resistant without first
    acknowledging the resistance.

3 = Neutral. Appropriate for informational client state. No invitation
    or blocking of change talk. Also: response is clinically appropriate
    but generic (Rule 5 cap).

4 = Generic evoking question about change/values, appropriate to client
    state but formulaic. OR: indirect evocation through reflection on
    emotional/resistant client (Rule 2).

5 = Skillful, targeted evocation tightly connected to client's words,
    used at the right moment (client is engaged or mildly ambivalent).

STATE-CONTINGENT CCT RULES:
  Client emotional → evoking question = CCT 1-2 (wrong timing)
  Client sustain-resistant → evoking without acknowledging first = CCT 2
  Client engaged → evoking = CCT 4-5 (right timing)
  Client ambivalent → double-sided reflection = CCT 5 (holds both sides)

CCT PRIORITY: Group A + Group C in same utterance → CCT <= 2.

======================================================================
DIMENSION 2 — SST (Emotional Safety) — SCORED ON EVERY TURN
======================================================================

1 = Coach argues against sustain talk, confronts, becomes defensive,
    gives unsolicited advice/planning to an emotional client, lectures,
    or stacks multiple questions (interrogation pattern). The response
    actively damages the MI alliance.

2 = Coach reassures, cheerleads, bypasses emotion, or subtly pressures.
    Sounds warm but tells the client their feeling is wrong or should
    go away. OR: coach gives directive advice to a client who has not
    asked for it. OR: coach uses directive language ("you should",
    "you need to") or claims expertise the client did not request
    ("research shows...").

3 = Neutral safety. Coach does nothing that threatens the alliance.
    Appropriate for informational content. Basic rapport maintained.

4 = Coach actively BUILDS safety. Reflects sustain talk or emotion
    without arguing. Asks permission before giving information.
    Emphasizes client autonomy. Normalizes the client's experience.

5 = Coach creates deep safety. Double-sided reflection that honors
    the sustain side. Names unspoken feelings. Makes space for the
    client to say "no" to change without judgment.

STATE-CONTINGENT SST RULES:
  Client emotional → advice/planning/evoking = SST 1 (ignores emotion,
    damages safety)
  Client emotional → reflecting the feeling = SST 4-5
  Client sustain-resistant → arguing/confronting = SST 1
  Client sustain-resistant → acknowledging resistance = SST 4-5
  Client engaged → appropriate evoking = SST 3-4 (safe, client is ready)
  Client informational → neutral response = SST 3
  ANY state → unsolicited advice without permission = SST <= 2 (power imbalance)
  ANY state → lecturing / stacked questions / directive language = SST <= 2 (Rule 6)

SST PRIORITY: Group A + Group C on safety-relevant content → SST <= 2.

======================================================================
DIMENSION 3 — EMPATHY (Listening Accuracy)
======================================================================

1 = No understanding. Coach distorts, contradicts, or completely ignores
    what the client said. Response could not have come from listening.

2 = Surface only. Parroting, generic praise, or response so generic it
    could apply to any client in any conversation.

3 = Basic understanding. Accurate simple reflection or on-topic question.
    Shows the coach heard the content but did not process deeper meaning.
    Also: response is on-topic and accurate but generic (Rule 5 cap).

4 = Clear understanding. Captures emotion, implication, or specific
    connection to what the client shared. Demonstrates real listening.

5 = Deep understanding. Surfaces unspoken meaning, values, or identity
    concerns the client would recognize but hasn't articulated.

EMPATHY SIGNALS:
  Negative (Empathy <= 2): ONLY distorted_reflection, parroting,
    or ignoring what client said for an unrelated topic.
  Positive (Empathy 4-5): complex_reflection, double_sided_reflection,
    tightly-tied open_question_evoking, specific affirmation.
  Basic (Empathy = 3): simple_reflection, neutral_question, rapport.
    Also: generic emotional labels on emotional clients (Rule 3 cap).

CRITICAL — STRATEGY ERRORS DO NOT REDUCE EMPATHY:
  unsolicited_advice, premature_planning, arguing_for_change,
  leading_question, reassurance, arguing_against_sustain
  → These are CCT or SST penalties. For Empathy, ask ONLY:
  "Did the coach's response show they UNDERSTOOD the client?"

  Examples:
  - "I hear how scared you are. Here's what I'd suggest..." → Emp=4, CCT=2
  - "You've been through so much. Don't worry, you'll be fine!" → Emp=4, SST=2
  - "What's a plan for exercise?" (client just shared grief) → Emp=1, CCT=2

====================================================================
EXAMPLES OF STATE-CONTINGENT INDEPENDENT SCORING
====================================================================

Example 1 — Client EMOTIONAL, coach gives advice after good reflection:
  Client: "I just feel like I've failed at everything. I can't do this anymore."
  Coach: "The weight of all those attempts that didn't work out — it's
  exhausting. What if we started with something really small, like a
  five-minute walk?"
  → CCT = 2 (evoking/planning on emotional client = wrong timing)
  → SST = 2 (advice bypasses the emotion instead of sitting with it)
  → Empathy = 4 (the first sentence captures the exhaustion accurately —
    coach DID understand, then made a strategy error)

Example 2 — Client ENGAGED, coach reflects well:
  Client: "I've been thinking about it a lot and I think I want to try
  walking more. Maybe in the mornings."
  Coach: "Something about the mornings appeals to you. What would that
  look like?"
  → CCT = 5 (targeted evoking at the right moment — client is ready)
  → SST = 4 (safe, supportive, no pressure)
  → Empathy = 4 (tied to specific detail — "mornings")

Example 3 — Client SUSTAIN-RESISTANT, coach argues:
  Client: "Honestly I don't think exercise is going to help. I've tried
  it and it doesn't work for someone like me."
  Coach: "Actually, research shows that even moderate exercise can
  significantly improve health outcomes for people in your situation."
  → CCT = 1 (arguing for change)
  → SST = 1 (arguing against sustain talk + lecturing + citing research
    client didn't ask for — Rule 6)
  → Empathy = 2 (generic research response, does not engage with "someone
    like me" — what does that mean to them?)

Example 4 — Client EMOTIONAL, coach sits with feeling:
  Client: "I'm so tired of feeling this way. Every morning I wake up
  and I just... I don't even want to get out of bed."
  Coach: "That heaviness every morning — it's like the day is already
  too much before it even starts."
  → CCT = 4 (indirect evocation through engaging — Rule 2)
  → SST = 5 (perfectly reflects the emotion, makes space, doesn't fix)
  → Empathy = 5 (surfaces unspoken meaning — "too much before it starts")

Example 5 — Client AMBIVALENT, coach takes sides:
  Client: "I know I should eat better, but cooking is the last thing I
  want to do after work. Sometimes fast food is the only thing that
  feels manageable."
  Coach: "It sounds like fast food has become a real lifeline after long
  days. You know, meal prepping on weekends could make healthier eating
  much more manageable."
  → CCT = 2 (unsolicited advice — should have explored the ambivalence)
  → SST = 2 (advice without permission threatens safety, dismisses the
    sustain side — "fast food feels manageable" was not honored)
  → Empathy = 4 (first sentence accurately captures why fast food works
    for them — "lifeline" shows understanding. Strategy error came after.)

Example 6 — Client INFORMATIONAL, generic response:
  Client: "I work as a nurse on the night shift."
  Coach: "What's that schedule like for you?"
  → CCT = 3 (neutral, appropriate)
  → SST = 3 (safe, no threat)
  → Empathy = 3 (on-topic follow-up, appropriate for informational state)

Example 7 — Generic response fails Rule 5:
  Client: "My daughter told me she's worried about my health. She
  actually cried. I've never seen her like that."
  Coach: "It sounds like your family is important to you. What would
  getting healthier mean for your relationships?"
  → CCT = 3 (evoking question but generic — could be said to any client
    talking about family. Does not reference the daughter crying or the
    shock of seeing her that way. Rule 5 cap.)
  → SST = 3 (no threat but missed the emotional weight)
  → Empathy = 3 (on topic but generic — Rule 5 cap. Did not engage with
    the specific detail: the daughter crying, the client's shock.)

Example 8 — Rule 6 violation on engaged client:
  Client: "I've been walking more this week, maybe three times."
  Coach: "That's great. Studies show that walking 30 minutes a day
  reduces cardiovascular risk by 30%. You should aim for at least
  five days. What time works best? Morning or evening? Could you
  add some stretching too?"
  → CCT = 2 (unsolicited advice, directive)
  → SST = 1 (lecturing + directive "you should" + stacked questions +
    citing research — Rule 6. Even though client is engaged, this
    creates a power imbalance that threatens safety.)
  → Empathy = 2 (ignores the client's own framing — "maybe three times"
    suggests tentativeness that was steamrolled)

====================================================================
INPUT
====================================================================

CONVERSATION CONTEXT (most recent turns before the utterance):

{insert_recent_turns}

COACH UTTERANCE TO EVALUATE:

"{insert_coach_utterance}"
\end{verbatim}
}
\subsection{Evaluation Protocol}
\label{sec:appendix:eval}

\paragraph{Evaluation matrix.}
We evaluate four coach methods against four client conditions on a fixed set of $20$ held-out personas (indices $4{,}000$--$4{,}019$ in the persona pool, see Section~\ref{sec:appendix:personas}). The four coach methods are \textsc{DACT} (the final $\pi_C$ adapter at iteration $12$ over the SFT-initialized base), \textsc{SFT} ($\pi_C^{\mathrm{SFT}}$, the SFT-initialized coach without co-evolution), \textsc{GPT-Coach} (GPT-5.5 under the same coach prompt that conditions \textsc{DACT}), and \textsc{GPT-Rubric} (frozen GPT-5.5 under a longer system prompt that embeds the LLM-judge scoring rubric verbatim). The four client conditions are \textsc{$R_8$-Client} (the iteration-$8$ co-evolved client adapter, identified in Section~\ref{sec:co-evolution-dynamics} as the most challenging client across the trajectory) and three out-of-distribution client simulators implemented as GPT-5.4-mini under three distinct system prompts: \textsc{OAI-Emotional} (a client expressing difficult feelings through specific scenes rather than vague labels), \textsc{OAI-Resistant} (a client presenting precise factual constraints and resisting generic suggestions), and \textsc{OAI-Ambivalent} (a client holding both change and sustain talk in equal weight and asking the coach to honor both). Each (coach method, client condition, persona) cell yields one dialogue, for a total of $4 \times 4 \times 20 = 320$ evaluation dialogues per evaluation run.

\paragraph{Dialogue generation.}
Every evaluation dialogue is generated by paired sampling from the coach and client policies under the fixed client opener ``Hi, there!'' and a horizon of $T = 8$ coach steps (corresponding to $16$ utterances after the opener for a total of $17$ utterances per dialogue). Coach and client utterances are sampled with temperature $0.2$. The temperature is intentionally lower than the $0.9$ used during preference tree generation: at evaluation we want a representative sample of each policy's modal behavior rather than the broad sample needed for preference signal during training. The coach system prompt and the persona-injected client prompt are reused unchanged from training, with two condition-specific exceptions: (i) \textsc{GPT-Rubric} replaces the short operational coach prompt with a longer system prompt that embeds the LLM-judge scoring rubric verbatim, by design (see Section~\ref{sec:eval-protocol}); and (ii) the OAI client conditions inject the persona JSON into one of the three OOD client templates rather than the in-distribution template used by $\pi_U$.

\paragraph{Scoring window and metrics.}
Each generated coach utterance from coach steps $t \in \{4, 5, 6, 7, 8\}$ is scored by the same LLM-judge protocol used during co-evolution (Section~\ref{sec:appendix:judge}), producing per-utterance integer scores on CCT, SST, and Empathy. Coach steps $1$--$3$ are excluded from scoring because they correspond to the engaging phase, in which all coach methods are dominated by introduction and rapport-building behaviors that do not exercise MI skill differentially; including these steps would compress score variance across methods and obscure the differences the evaluation is designed to detect. The five scored coach steps per dialogue yield $5 \times 20 = 100$ scored coach utterances per (coach method, client condition) cell.

We report two complementary cell-level metrics. The first, \emph{\textit{mean3}}, is the cell-level mean of the three-dimension average score, computed by first averaging CCT, SST, and Empathy at the utterance level into a scalar, then averaging that scalar over the $100$ scored utterances of the cell. We additionally report each dimension's per-cell mean (CCT, SST, Empathy separately), so the four reported continuous quantities per cell are CCT, SST, Empathy, and \textit{mean3}. The second metric, \emph{\textit{anti}\%}, is the sentence-level proportion of MI anti-pattern occurrences, computed across all sentences in the $100$ scored utterances of the cell. A sentence contributes to \textit{anti}\% if its Step-2 function label falls in Group C (the eight MI anti-pattern labels: \texttt{leading\_question}, \texttt{premature\_planning}, \texttt{arguing\_for\_change}, \texttt{arguing\_against\_sustain}, \texttt{reassurance}, \texttt{unsolicited\_advice}, \texttt{parroting}, and \texttt{distorted\_reflection}). \textit{anti}\% is therefore a single scalar per cell expressed as a percentage in $[0, 100]$ that summarizes the rate at which the coach produces clinically inappropriate content at the sentence level, complementing \textit{mean3} by capturing the floor of MI competence: a method may achieve a high \textit{mean3} by being broadly competent on most turns yet still produce frequent anti-pattern violations at the sentence level, and \textit{anti}\% surfaces those violations independently of the continuous quality summary.

To summarize each method across client conditions, we report a cross-condition aggregate \emph{4-cond avg}: the unweighted mean of each cell-level metric (\textit{mean3} and \textit{anti}\%) across all four client conditions.

\section{Hyperparameter Setting}
\label{sec:appendix:hyperparams}

Table~\ref{tab:hyperparams-full} lists the hyperparameter configuration used in our experiments. All training and inference run on a single NVIDIA H200 GPU with 140 GB of memory, with the base model loaded in 4-bit GPTQ form and LoRA adapters attached on top of the quantized backbone.

\paragraph{Why the client uses a fixed SFT reference.}
While the coach is trained against a rolling reference $\pi_C^{k-1}$ that updates each iteration, the client is trained against a fixed reference anchored at its SFT initialization $\pi_U^{\mathrm{SFT}}$. This choice reflects a structural difference in the supervisory signals each agent receives. The coach's reward is absolute rather than relative: each utterance is scored for CCT, SST, and Empathy, independent of the coach's own past behavior, and the rubric stays fixed across all iterations. The rolling reference therefore only needs to bound per-iteration KL drift, while the judge itself continually pulls the policy toward the same notion of MI quality from whatever starting point the previous iteration produced. The client, by contrast, has no positive reward of its own. Its preference signal is constructed by negating the coach's downstream Q-values, so chosen client utterances are exactly those that drive the coach to lower MI quality. This signal tells the client only how to make the coach worse, not what realistic client behavior should look like. Under a rolling client reference, this purely adversarial signal is structurally unstable: each iteration's adversarial drift becomes the next iteration's baseline, and over $K=13$ iterations the cumulative drift can push the client toward degenerate exploits that defeat the coach without resembling any plausible client. The fixed SFT reference supplies the external anchor that the client's adversarial reward does not: it constrains the policy to remain within the distribution of realistic client utterances while it becomes increasingly challenging to coach.

\begin{table}[h]
\centering
\caption{\small Hyperparameter configuration for \textsc{DACT}.}
\centering
\vspace{0.1in}
\small
\renewcommand{\arraystretch}{1.1}
\begin{tabular}{@{}ll@{}}
\toprule
\textbf{Hyperparameter} & \textbf{Value} \\
\midrule
Base model & Qwen2.5-32B-Instruct-GPTQ-Int4 \\
LoRA rank / alpha / dropout & $32$ / $32$ / $0.1$ \\
LoRA target modules & all attention and MLP projections \\
\midrule
SFT dialogues & $3{,}000$ (GPT-4o-mini) \\
\midrule
Trees per iteration $N$ & $3$ \\
Candidates per branching point $M$ & $3$ \\
Tree horizon $T$ / branching steps & $T = 8$, two levels at $t = 4$ and $t = 6$ \\
Branching points per tree & $1$ first-level $+\ 9$ second-level $= 10$ \\
Tree gen sampling: temp / top-$p$ & $0.9$ / $0.95$ \\
Discount factor $\gamma$ & $0.5$ \\
\midrule
DPO learning rate & $1\times 10^{-5}$ \\
DPO scheduler & constant \\
DPO batch & $4$ \\
Max sequence length & $3{,}200$ \\
Epochs per iteration & $1$ \\
Precision / optimizer & bf16 / AdamW \\
\midrule
Coach reference policy & rolling: $\pi_C^{k-1}$ \\
Client reference policy & fixed: $\pi_U^{\mathrm{SFT}}$ \\
\midrule
Co-evolution iterations $K$ & $13$ \\
Eval dialogue horizon & $8$ coach steps ($17$ utterances) \\
Eval scoring window & $t \in [4, 8]$ ($5$ scored coach steps) \\
Eval sampling: temp / top-$p$ & $0.2$ / $0.95$ \\
LLM judge & GPT-5.4-mini (frozen, temperature $0$) \\
GPU & $1\times$ NVIDIA H200 (140 GB) \\
\bottomrule
\end{tabular}
\label{tab:hyperparams-full}
\end{table}
\newpage
\appendix

\end{document}